\newtheorem{theorem}{Theorem}[section]
\newtheorem{proposition}[theorem]{Proposition}
\newtheorem{exam}[theorem]{Example}
\newcommand{\bbox}{\vrule height7pt width4pt depth1pt}
\newenvironment{example}{\begin{exam} \rm }{\hfill $\bbox$ \end{exam}}
\newtheorem{rema}[theorem]{Remark}
\newtheorem{obs}[theorem]{Observation}
\newtheorem{defin}[theorem]{Definition}
\newenvironment{definition}{\begin{defin} \rm }{\hfill $\bbox$
\end{defin}}
\newenvironment{proof}{\noindent {\bf Proof.}}{\hfill $\bbox$\\}
\newcommand{\Oo}{\Omega}
\newcommand{\oo}{\omega}
\newcommand{\ffi}{\varphi}
\newcommand{\vDasho}{\vDash^{\ou}}
\newcommand{\ou}{\mathit{out}}
\newcommand{\vDashi}{\vDash^{\inner}}
\newcommand{\inner}{\mathit{in}}
\newcommand{\vDashos}{\vDash^{\ous}}
\newcommand{\ous}{\mathit{out,ai}}
\newcommand{\vDashis}{\vDash^{\inners}}
\newcommand{\inners}{\mathit{in,ai}}
\newcommand{\M}{\mathcal{M}}
\newcommand{\Mis}{\mathcal{M}^{\mathit{ai}}}
\newcommand{\N}{\mathcal{N}}
\newcommand{\Mcpa}{\mathcal{M}^{\mathit{cpa}}}
\newcommand{\Miscpa}{\mathcal{M}^{\mathit{cpa,ai}}}
\newcommand{\F}{\mathcal{F}}
\newcommand{\B}{{\mathcal B}}
\newcommand{\pr}{{\mathit{pr}}}
\newcommand{\commentout}[1]{}
\renewcommand{\iff}{\Leftrightarrow}
\newcommand{\true}{\mathit{true}}
\newcommand{\rec}{\mathit{rec}}
\newcommand{\Supp}{\mathit{Supp}}
\title{Ambiguous Language and Differences in Beliefs\thanks{This paper
wil appear in the \emph{Proceedings of the Thirteenth International
Conference on  Principles of Knowledge Representation and Reasoning (KR
2012)}.}}
\author{{Joseph Y. Halpern}\\
Computer Science Dept.\\
Cornell University\\
Ithaca, NY\\
E-mail: halpern@cs.cornell.edu\\
\and
Willemien Kets\\
Kellogg School of Management\\
Northwestern University\\
Evanston, IL \\
E-mail: w-kets@kellogg.northwestern.edu
}
\begin{document}
\maketitle

\begin{abstract}
Standard models of multi-agent modal logic do not capture the fact that
information is often \emph{ambiguous}, and may be interpreted in different
ways by different agents. We propose a framework that can model this,
and consider different semantics that capture different assumptions
about the agents' beliefs regarding whether or not there is ambiguity.
We consider the impact of ambiguity on a seminal result in
economics: Aumann's result saying that agents with a common prior cannot
agree to disagree.  This result is known not to hold if agents do not have a
common prior;  we show that it also does not hold in the presence of
ambiguity.  We then consider the tradeoff between assuming a \emph{common
interpretation} (i.e., no ambiguity) and a common prior (i.e., shared
initial beliefs).
\end{abstract}

\section{Introduction} \label{sec:intro}

In the study of multi-agent modal logics, it is always implicitly
assumed that all agents interpret all
formulas the same way.  While they may have different beliefs regarding
whether a formula $\ffi$ is true, they agree on what $\ffi$ means. Formally, this is captured by the fact that the truth of $\ffi$ does not depend on the agent.

Of course, in the real world, there is \emph{ambiguity}; different
agents may interpret the same utterance in different ways.
For example, consider a public announcement $p$. Each player $i$ may
interpret $p$ as corresponding to some event $E_i$, where $E_i$ may be
different from $E_j$ if $i \ne j$.  This seems natural: even if
people have a common background, they may still disagree on how to
interpret certain phenomena or new
information. Someone may interpret a smile as just a sign of
friendliness; someone else may interpret it as a ``false'' smile,
concealing contempt; yet another person may interpret it as a sign of
sexual interest.

To model this formally, we can use a straightforward approach already
used in \cite{Hal43,GroveH2}: formulas are interpreted
relative to a player.  But once we allow such ambiguity, further
subtleties arise.  Returning to the announcement $p$, not only can it be
interpreted differently by different players, it may not even occur
to the players that
others may interpret the announcement in a different way.
Thus, for example, $i$ may believe that $E_i$ is
common knowledge.
The assumption that each player believes that her interpretation is how
everyone interprets the announcement is but one assumption we can make
about ambiguity. It is also possible that player $i$ may be aware that
there is more than one interpretation of $p$, but believes that player
$j$ is aware of only one interpretation. For example, think of a
politician making an ambiguous statement which he realizes that
different constituencies will interpret differently, but will not
realize that there are other possible interpretations.
In this paper, we investigate a number of different semantics of
ambiguity that correspond to some standard assumptions that people make
with regard to ambiguous statements, and investigate their
relationship.

Our interest in
ambiguity is motivated by a seminal result in game theory: Aumann's
\citeyear{Aumann_1976} theorem showing that players cannot ``agree to
disagree''.  More precisely, this theorem says that agents with a common
prior on a state space cannot have common knowledge that they have
different posteriors.%
\footnote{We explain this result in more detail in Section~\ref{sec:examples}.}
This result has been viewed as paradoxical in the economics literature.
Trade in a stock market seems to require common knowledge of
disagreement (about the value of the stock being traded), yet we clearly
observe a great deal of trading.

One well known explanation for the disagreement is that we do not in
fact have common priors: agents start out with different beliefs.  We
provide a different explanation here, in terms of ambiguity.
 It is easy to show that we can agree to disagree when
there is ambiguity, even if there is a common prior.
We then show that these two explanations of the possibility of agreeing
to disagree are closely related, but not identical. We can convert an
explanation in
terms of ambiguity to an explanation in terms of lack of common
priors.\footnote{More precisely, we can convert a model with ambiguity
and a common prior to an equivalent model---equivalent in the sense that
the same formulas are true---where there is no ambiguity but no common
prior.}
Importantly, however, the converse does not hold; there are models in
which players have a common interpretation that cannot in general be
converted into an equivalent model with ambiguity and a common prior. In
other words, using heterogeneous priors may be too permissive if we are
interested in modeling a situation where differences in beliefs are due
to differences in interpretation.

Although our work is motivated by applications in economics,
ambiguity has long been a concern in linguistics
and natural language processing.  For example, there has been
a great deal of work on word-sense
disambiguation (i.e., trying to decide from context which of the
multiple meanings of a word are intended);
see Hirst \citeyear{Hirst_1988} for a seminal contribution, and Navigli
\citeyear{Navigli09} for a recent survey.
However, there does not seem to be much work on incorporating ambiguity
into a logic.
Apart from the literature on the logic of context and on
underspecification (see Van Deemter and Peters
\citeyear{VanDeemterPeters_1996}),
the only paper that we are aware of that does this is
one by Monz \citeyear{Monz99}.  Monz allows for statements that have
multiple interpretations, just as we do.
But rather than
incorporating the ambiguity directly into the logic, he considers updates
by ambiguous statements.  There are also connections between ambiguity
and vagueness.  Although the two notions are different---a term is
\emph{vague} if it is not clear what its meaning is, and
is \emph{ambiguous} if it can have multiple meanings, Halpern
\citeyear{Hal43} also used agent-dependent interpretations in his
model of vagueness, although the issues that arose were quite different
from those that concern us here.

The rest of this paper is organized as follows.
 Section \ref{sec:model} introduces the logic that we consider. Section
\ref{sec:examples} investigates the implications of the common-prior
assumption when there is ambiguity. Section \ref{sec:equivalence}
studies the tradeoff between heterogeneous priors and ambiguity, and
Section \ref{sec:concl} concludes.

\section{Syntax and Semantics} \label{sec:model}

\subsection{Syntax}

We want a logic where players use a fixed common language,
but each player may interpret formulas in the language differently.
We also want to allow the players to be able to reason about
(probabilistic) beliefs.

The syntax of the logic is straightforward
(and is, indeed, essentially the syntax already used in papers going
back to Fagin and Halpern \citeyear{FH3}).
There is a finite, nonempty set $N = \{1, \ldots, n\}$ of players, and
a countable, nonempty set $\Phi$ of primitive propositions. Let
$\mathcal{L}_n^C(\Phi)$ be the set of formulas
that can be constructed starting from $\Phi$, and closing off under
conjunction, negation, the modal operators
$\{CB_G\}_{G \subseteq N, G \neq \emptyset}$, and the formation
of probability formulas.
(We omit the $\Phi$ if it is irrelevant or clear from context.)
  Probability formulas are
constructed as follows. If
$\ffi_1, \ldots, \ffi_k$ are formulas, and $a_1, \ldots, a_k, b
\in \mathbb{Q}$, then for $i \in N$,
\[
a_1 \pr_i(\ffi_1) + \ldots + a_k \pr_i(\ffi_k) \geq b
\]
is a probability formula,
where $\pr_i(\ffi)$ denotes the probability that player $i$
assigns to a formula $\ffi$.
Note that this syntax
allows for nested probability formulas. We use the abbreviation
$B_i \ffi$ for $\pr_i (\ffi) = 1$, $EB_G^1 \ffi$ for $\wedge_{i \in G} B_i
\ffi$, and
$EB^{m+1}_G\ffi$ for $EB^m_G EB^1_G\ffi$ for $m = 1,2 \ldots$.
Finally, we take $\true$ to be the abbreviation for a fixed tautology
such as $p \vee \neg p$.

\subsection{Epistemic probability structures}

There are standard approaches for interpreting this language \cite{FH3},
but they all assume that there is no ambiguity, that
is, that all players interpret the primitive propositions the same way.
To allow for different interpretations, we use an approach used earlier
\cite{Hal43,GroveH2}: formulas are interpreted
relative to a player.

An \emph{(epistemic probability) structure} (\emph{over $\Phi$})
has the form
\[
M = (\Oo, (\Pi_j)_{j \in N}, ({\cal P}_j)_{j \in N}, (\pi_j)_{j \in N}),
\]
where $\Oo$ is the state space, and for each $i \in N$, $\Pi_i$
is a partition of $\Oo$, ${\cal P}_i$ is a function that
assigns to each $\oo \in \Oo$ a probability space ${\cal
P}_i(\oo) = (\Oo_{i,\oo}, \F_{i,\oo}, \mu_{i,\oo})$,
and $\pi_i$ is an interpretation that associates with each
state a truth assignment to the primitive propositions in $\Phi$. That
is, $\pi_i(\oo)(p) \in \{\mathbf{true}, \mathbf{false}\}$ for
all $\oo$ and each primitive proposition $p$.
Intuitively, $\pi_i$ describes player $i$'s interpretation of the
primitive propositions. Standard models use only a single
interpretation $\pi$; this is equivalent in our framework to assuming
that $\pi_1 = \cdots = \pi_n$. We call a structure where $\pi_1 = \cdots = \pi_n$ a
\emph{common-interpretation structure}.
Denote by $[[p]]_i$ the set of states where $i$ assigns the value
$\mathbf{true}$ to $p$.
The partitions $\Pi_i$ are called \emph{information partitions}.
While it is more standard in the philosophy and computer science
literature to use models where there is a binary relation ${\cal K}_i$
on $\Oo$ for each agent $i$ that describes $i$'s accessibility relation
on states,
we follow the common approach in economics
of working with information partitions here, as that makes it
particularly easy to define a player's probabilistic beliefs. Assuming
information partitions corresponds to the case that ${\cal K}_i$ is an
equivalence relation (and thus defines a partition).  The intuition is
that a cell in the partition $\Pi_i$ is defined by some information that
$i$ received, such as signals or observations of the world.
Intuitively, agent $i$
receives the same information at each state in a cell of $\Pi_i$.
Let $\Pi_i(\oo)$ denote the cell of the partition $\Pi_i$ containing $\oo$.
Finally, the probability space ${\cal
P}_i(\oo) = (\Oo_{i,\oo}, \F_{i,\oo}, \mu_{i,\oo})$ describes the
beliefs of player $i$ at state $\oo$, with $\mu_{i,\oo}$ a probability
measure defined on the subspace $\Oo_{i,\oo}$ of the state space $\Oo$.
The $\sigma$-algebra $\F_{i, \oo}$ consists of the subsets of
$\Oo_{i,\oo}$ to which $\mu_{i,\oo}$ can assign a probability.  (If
$\Oo_{i,\oo}$ is finite, we typically take $\F_{i,\oo} =
2^{\Oo_{i,\oo}}$, the set
of all subsets of $\Oo_{i,\oo}$.)
The interpretation is that $\mu_{i,\oo}(E)$ is the probability that
$i$ assigns to event $E \in \F_{i,\oo}$ in state $\oo$.

\commentout{
We thus model ambiguity or vagueness in a fundamentally different way
than much of the literature. Blume and Board \citeyear{BlumeBoard_2009}
take a message $m$ to be a number, and its interpretation is a
(nondegenerate) random variable with mean $m$; Lipman
\citeyear{Lipman_2009} defines vagueness in terms of mixing over
messages; Weaver \citeyear{Weaver_1949} takes an
information-theoretic perspective. The advantage of our syntactic
approach is that it directly models ambiguity, which makes it
straightforward to model the different ways players can reason about
others' interpretations.
}

Throughout this paper, we make the following assumptions regarding the
probability assignments ${\cal P}_i$, $i \in N$:
\begin{description}
  \item[\textbf{A1}.] For all $\oo \in \Oo$, $\Oo_{i,\oo} = \Pi_i(\oo)$.
  \item[\textbf{A2}.] For all $\oo \in \Oo$, if $\oo' \in \Pi_i(\oo)$,
 then ${\cal P}_i(\oo') = {\cal P}_i(\oo)$.
  \item[\textbf{A3}.] For all $j \in N, \oo, \oo' \in \Oo$,
      $\Pi_i(\oo) \cap \Pi_j(\oo') \in \F_{i,\oo}$.
\end{description}
Furthermore, we make the following joint assumption on players'
interpretations and information partitions:
\begin{description}
  \item[\textbf{A4}.] For all $\oo \in \Oo$, $i \in N$, and
            primitive proposition $p \in \Phi$, $\Pi_i(\oo)
      \cap [[p]]_i \in \F_{i,\oo}$.
\end{description}
These are all standard assumptions.  A1 says that the set of states to
which player $i$ assigns probability at state $\oo$ is just the set
$\Pi_i(\oo)$ of worlds that $i$ considers possible at state $\oo$.
A2 says that the probability space used is the same at all the worlds in
a cell of player $i$'s partition.  Intuitively, this says that
player $i$ knows his probability space. Informally, A3 says that player
$i$ can assign a probability to each of $j$'s cells, given his information.
A4 says that primitive propositions (as interpreted by player $i$) are
measurable
according to player $i$.
\commentout{
Finally, say that a structure is
\emph{countably partitioned} if for each player
$i$, the information partition $\Pi_i$ has countably many elements,
i.e., $\Pi_i$ is a finite or countably infinite collection of subsets of
$\Omega$. In the next section, we show that in countably partitioned
structures that satisfy
the assumptions above, a player's beliefs can be thought of as being generated from a prior.
}

\subsection{Prior-generated beliefs and the common-prior assumption}

One assumption that we do not necessarily make, but want to examine in
this framework, is the common-prior assumption.
The common-prior assumption is an instance of a more general assumption,
that beliefs are generated from a prior, which we now define.  The
intuition is that players start with a prior probability; they then
update the prior in light of their information.  Player $i$'s information
is captured by her partition  $\Pi_i$. Thus, if $i$'s prior is $\nu_i$,
then we would expect $\mu_{i,\oo}$ to be $\nu_i(\cdot \mid \Pi_i(\oo))$.

\begin{definition}
An epistemic probability structure $M = (\Oo, (\Pi_j)_{j \in
N}, ({\cal P}_j)_{j \in N}, (\pi_j)_{j \in N})$ has
\emph{prior-generated beliefs}
(\emph{generated by $(\F_1,\nu_1),\ldots, (\F_n,\nu_n)$})
if, for each player $i$, there exist
probability spaces $(\Oo,\F_i,\nu_i)$ such that
\begin{itemize}
\item for all $i,j \in N$ and $\oo \in \Oo$, $\Pi_j(\oo) \in \F_i$;
\item for all $i \in N$ and $\oo \in \Oo$, ${\cal P}_i(\oo) =
    (\Pi_i(\oo), \F_i \mid \Pi_i(\oo), \mu_{i,\oo})$,
    where $\F_i\mid\Pi_i(\oo)$ is the restriction of $\F_i$ to $\Pi_i(\oo)$,%
\footnote{Recall that the restriction of ${\cal F}_i$ to
$\Pi_i(\oo)$ is the $\sigma$-algebra $\{B \cap \Pi_i(\oo): B \in {\cal
F}_i\}$.} %
and $\mu_{i,\oo}(E) =
    \nu_i(E \mid \Pi_i(\oo))$ for all $E \in
    \F_i \mid \Pi_i(\oo)$ if $\nu_i(\Pi_i(\oo)) > 0$.
(There are no constraints on $\nu_{i,\oo}$ if $\nu_i(\Pi_i(\oo)) = 0$.)
\end{itemize}
\end{definition}
It is easy to check that if $M$ has prior-generated beliefs, then
$M$ satisfies A1, A2, and A3.
More interestingly for our purposes, the converse also holds for a large
class of structures.
Say that a structure is
\emph{countably partitioned} if for each player
$i$, the information partition $\Pi_i$ has countably many elements,
i.e., $\Pi_i$ is a finite or countably infinite collection of subsets of
$\Omega$.

\begin{proposition}\label{prop:priorgenerated}
If a structure $M$ has
prior-generated beliefs, then
$M$ satisfies A1, A2, and A3. Moreover, every countably partitioned structure that
satisfies A1, A2, and A3 is one with prior-generated beliefs,
with the priors $\nu_i$ satisfying $\nu_i(\Pi_i(\oo))>0$ for each player
$i \in N$ and state $\oo \in \Oo$.%
\end{proposition}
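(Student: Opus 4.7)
The proposition has two directions, so I would split the proof accordingly.

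For the forward direction (prior-generated implies A1--A3), I would simply read each axiom off the definition. A1 is immediate because the first coordinate of ${\cal P}_i(\oo)$ is declared to be $\Pi_i(\oo)$. A2 follows because $\Pi_i(\oo')=\Pi_i(\oo)$ whenever $\oo'\in\Pi_i(\oo)$ (partitions), so both $\F_i\mid\Pi_i(\oo)$ and the conditional $\nu_i(\cdot\mid\Pi_i(\oo))$ are unchanged. A3 follows because the first bullet in the definition of prior-generated beliefs says $\Pi_j(\oo')\in\F_i$, and so $\Pi_j(\oo')\cap\Pi_i(\oo)\in\F_i\mid\Pi_i(\oo)$.

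The substantive direction is the converse. Fix $i$ and enumerate the cells of the (countable) partition $\Pi_i$ as $C_1^i,C_2^i,\ldots$, and for each $k$ pick a representative $\oo_k\in C_k^i$; by A2 the probability space ${\cal P}_i(\oo)$ is the same for every $\oo\in C_k^i$, so writing $\mu_k^i:=\mu_{i,\oo_k}$ and $\F_k^i:=\F_{i,\oo_k}$ causes no ambiguity. I would then define
\[
  \F_i \;=\; \{\,B\subseteq\Oo : B\cap C_k^i \in \F_k^i \text{ for every }k\,\}.
\]
A routine check shows $\F_i$ is a $\sigma$-algebra. A3 now buys measurability of partition cells: for each $j,\oo'$, axiom A3 applied cell-by-cell gives $\Pi_j(\oo')\cap C_k^i\in\F_k^i$ for every $k$, hence $\Pi_j(\oo')\in\F_i$. (In particular $C_k^i\in\F_i$ for every $k$.)

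Next, choose strictly positive weights $\alpha_k^i>0$ with $\sum_k\alpha_k^i=1$ (e.g.\ $\alpha_k^i=2^{-k}$ if $\Pi_i$ is infinite, or uniform weights if finite), and define
\[
  \nu_i(B) \;=\; \sum_{k}\alpha_k^i\,\mu_k^i(B\cap C_k^i), \qquad B\in\F_i.
\]
Countable additivity of $\nu_i$ follows from countable additivity of each $\mu_k^i$ together with Tonelli for non-negative double series; this is precisely where countable partitioning is used. By construction $\nu_i(C_k^i)=\alpha_k^i>0$, so $\nu_i(\Pi_i(\oo))>0$ for every $\oo$. Finally, for any $\oo\in C_k^i$ and any $E\in\F_i\mid\Pi_i(\oo)=\F_i\mid C_k^i$, one has $E\subseteq C_k^i$, and so $\nu_i(E)=\alpha_k^i\mu_k^i(E)$, giving $\nu_i(E\mid\Pi_i(\oo))=\mu_k^i(E)=\mu_{i,\oo}(E)$, which is exactly the condition for prior-generated beliefs.

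The only real obstacle is the bookkeeping for the converse: one must define $\F_i$ so that it is simultaneously a $\sigma$-algebra, contains every $\Pi_j(\oo')$, and restricts to the given $\F_{i,\oo}$ on each cell; once the cell-wise definition of $\F_i$ is in place, A3 delivers the cross-measurability condition and countable partitioning makes the weighted sum over cells a bona fide probability measure with the required positivity.
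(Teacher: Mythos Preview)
Your proof is correct and follows essentially the same approach as the paper: both construct $\nu_i$ as a convex combination (with weights $2^{-k}$ or uniform) of the cell measures $\mu_{i,\oo_k}$ over an enumeration of $\Pi_i$'s cells. You are somewhat more careful than the paper about the $\sigma$-algebra $\F_i$ (giving the explicit cell-wise definition and verifying via A3 that each $\Pi_j(\oo')$ lies in it) and about checking that $\F_i\mid\Pi_i(\oo)$ recovers $\F_{i,\oo}$, whereas the paper simply takes $\F_i$ to be generated by $\bigcup_\oo\F_{i,\oo}$ and asserts the rest.
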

\begin{proof}
The first part is immediate. To prove the second claim, suppose that $M$
is a structure satisfying A1--A3. Let $\F_i$ be the
unique algebra generated by $\cup_{\oo \in \Oo} \F_{i,\oo}$. To define
$\nu_i$, if there are $N_i < \infty$ cells in the partition $\Pi_i$,
define $\nu_i(\oo) = \tfrac{1}{N_i} \mu_{i,\oo}(\oo)$. Otherwise, if the
collection $\Pi_i$ is countably infinite, order the elements of $\Pi_i$
as $p_i^1, p_i^2, \ldots$.  Choose some state $\oo_k \in p_i^k$ for each
$k$, with associated probability space ${\cal P}_i(\oo_k) =
(\Oo_{i,\oo_k}, \F_{i,\oo_k}, \mu_{i,\oo_k})$.
By A2, each choice of $\oo_k$ in $p_i^k$ gives the
same probability measure $\mu_{i,\oo_k}$.  Define $\nu_i = \sum_k
\tfrac{1}{2^k} \mu_{i,\oo_k}$.
It is easy to see that
$\nu_i$ is a probability measure on $\Oo$, and that
$M$ is generated by $(\F_1,\nu_1),\ldots, (\F_n,\nu_n)$.
\end{proof}

Note that the requirement that that $M$ is countably partitioned is necessary to
ensure that we can have
$\nu_i(\Pi_i(\oo))>0$ for each player $i$ and state $\oo$.

In light of Proposition~\ref{prop:priorgenerated}, when it is
convenient, we will talk of a structure satisfying A1--A3 as being
\emph{generated by} $(\F_1,\nu_1), \ldots, (\F_n,\nu_n)$.

The common-prior assumption is essentially just the special case of
prior-generated beliefs where all the
priors are identical.  We make one additional technical assumption. To state this
assumption, we need one more definition. A state $\oo' \in
\Oo$ is \emph{$G$-reachable} from $\oo \in \Oo$, for $G \subseteq N$, if
there exists a
sequence $\oo_0, \ldots, \oo_m$ in $\Oo$ with $\oo_0 = \oo$ and
$\oo_m = \oo'$, and $i_1, \ldots, i_m \in G$ such that
$\oo_\ell \in \Pi_{i_\ell}(\oo_{\ell -1})$. Denote by $R_G(\oo)
\subseteq \Oo$ the set of states $G$-reachable from $\oo$.

\begin{definition} \label{def:CPA}
An epistemic probability structure $M = (\Oo, (\Pi_j)_{j \in
N}, ({\cal P}_j)_{j \in N}, (\pi_j)_{j \in N})$ satisfies the
\emph{common-prior assumption} (\emph{CPA}) if there exists a probability space
$(\Oo,\F,\nu)$ such that $M$ has prior-generated beliefs generated by
$((\F,\nu), \ldots, (\F,\nu))$, and $\nu(R_N(\oo)) > 0$ for all $\oo \in
\Oo$.
\end{definition}
As shown by Halpern~\citeyear{Halpern_2002}, the assumption that $\nu(R_N(\oo))
> 0$ for each $\oo \in \Oo$ is needed for Aumann's
\citeyear{Aumann_1976} impossibility result.

\subsection{Capturing ambiguity}

We use epistemic probability structures to give meaning to formulas.
Since primitive propositions are interpreted relative to players,
we must allow the interpretation of arbitrary formulas to depend on the
player as well. Exactly how we do this depends on what further assumptions we
make about what
players know about each other's interpretations.  There are many
assumptions that could be made. We focus on two of them here, ones that
we believe arise in applications of interest,
and then
reconsider them under the assumption that there may be some ambiguity
about the partitions.
\paragraph{Believing there is no ambiguity}

The first approach is appropriate for situations where players
may interpret statements differently, but it does not occur to them
that there is another way of interpreting the statement.
Thus, in this
model, if there is a public announcement, all players will think that
their interpretation of the announcement is common knowledge.
We write $(M,\oo,i) \vDasho \ffi$ to denote that $\ffi$ is true at
state $\oo$ according to player $i$
(that is, according to $i$'s interpretation of the primitive
propositions in $\ffi$).
The superscript $\ou$ denotes
\emph{outermost scope}, since the formulas are interpreted relative to
the ``outermost'' player, namely the player $i$ on the left-hand side of
$\vDasho$.  We define $\vDasho$, as usual, by induction.

If $p$ is a primitive proposition,
\[
(M,\oo,i) \vDasho p  \  \text{iff} \   \pi_i(\oo)(p) = \mathbf{true}.
\]
This just says that player $i$ interprets a primitive proposition $p$
according to his interpretation function $\pi_i$.  This clause is common
to all our approaches for dealing with ambiguity.

For conjunction and negation, as is standard,
\begin{gather*}
(M,\oo,i) \vDasho \neg \ffi  \  \text{iff}  \
(M,\oo,i) \not{\vDash}^{\ou} \ffi,\\
(M,\oo,i) \vDasho  \ffi \wedge \psi \   \text{iff} \
  (M,\oo,i) \vDasho \ffi \text{ and } (M,\oo,i) \vDasho \psi.
\end{gather*}

Now consider a probability formula of the form $a_1 \pr_j(\ffi_1) +
\ldots + a_k \pr_j(\ffi_k) \geq b$.
The key feature that distinguishes this semantics is how $i$ interprets
$j$'s beliefs.
This is where we capture the intuition that it does not
occur to $i$ that there is another way of interpreting the formulas other
than the way she does.  Let
\[[[\ffi]]^{\ou}_i = \{\oo: (M,\oo,i) \vDasho \ffi\}.\]
Thus, $[[\ffi]]^{\ou}_i$ is the event consisting of the set of states
where $\ffi$ is true, according to $i$.
Note that A1 and A3 guarantee that the restriction of $\Oo_{j,\oo}$ to $\Pi_i(\oo)$ belongs to $\F_{i,\oo}$.
Assume inductively that $[[\ffi_1]]^{\ou}_i \cap
\Omega_{j,\oo}, \ldots, [[\ffi_k]]^{\ou}_i \cap  \Omega_{j,\oo}
\in \F_{j,\oo}$.  The base case of this induction, where
$\ffi$ is a primitive proposition, is immediate from A3 and A4,
and the
induction assumption clearly extends to negations and conjunctions. We now define
\begin{multline*}\label{eq:prob_formula}
(M,\oo,i) \vDasho a_1 \pr_j(\ffi_1) + \ldots + a_k \pr_j(\ffi_k)
\geq b   \ \text{iff} \ \nonumber\\ a_1
\mu_{j,\oo}([[\ffi_1]]^{\ou}_i \cap \Omega_{j,\oo}) + \ldots + a_k
\mu_{j,\oo}([[\ffi_k]]^{\ou}_i \cap \Omega_{j,\oo}) \geq b.
\end{multline*}
Note that it easily follows from A2 that
$(M,\oo,i) \vDasho a_1 \pr_j(\ffi_1) + \ldots + a_k \pr_j(\ffi_k)
\geq b$ if and only if
$(M,\oo',i) \vDasho a_1 \pr_j(\ffi_1) + \ldots + a_k \pr_j(\ffi_k)
\geq b$ for all $\oo' \in \Pi_j(\oo)$.  Thus,
$[[a_1 \pr_j(\ffi_1) + \ldots + a_k \pr_j(\ffi_k) \geq b]]_i$ is a union of
cells of $\Pi_j$, and hence
$[[a_1 \pr_j(\ffi_1) + \ldots + a_k \pr_j(\ffi_k) \geq b]]_i \cap
\Omega_{j,\oo} \in \F_{j,\oo}$.

With this semantics,
according to player $i$, player $j$ assigns $\ffi$ probability
$b$ if and
only if the set of worlds where $\ffi$ holds according to $i$ has
probability $b$ according to $j$.
Intuitively, although $i$ ``understands'' $j$'s probability space, player
$i$ is not aware that $j$ may interpret $\ffi$ differently from the way
she ($i$) does. That $i$ understands $j$'s probability space
is plausible if we assume that there is a common prior and that $i$
knows $j$'s partition (this knowledge is embodied in the assumption that
$i$ intersects $[[\ffi_k]]^{\ou}_i$ with $\Omega_{j,\oo}$ when assessing
what probability $j$ assigns to $\ffi_k$).\footnote{Note that at state $\oo$, player $i$ will not in general know that it is state
$\oo$.  In particular, even if we assume that $i$ knows which element of
$j$'s partition contains $\oo$, $i$ will not in general know which of
$j$'s cells describes $j$'s current information.  But we assume that
$i$ does know that \emph{if} the state is $\oo$, then $j$ information is
described by $\Oo_{j,\oo}$.  Thus, as usual, ``$(M,i,\oo) \vDasho
\ffi$'' should
perhaps be understood as ``according to $i$, $\ffi$ is true if the
actual world is $\oo$''. This interpretational issue arises even
without
ambiguity in the picture.}

Given our interpretation of probability formulas, the interpretation of
$B_j \ffi$ and $EB^k \ffi$ follows.
For example,
\[
(M,\oo,i) \vDasho B_j \ffi \ \text{iff} \ \mu_{j,\oo}([[\ffi]]^{\ou}_i) = 1.
\]
For readers more used to belief defined in terms of a possibility
relation, note that if the probability measure $\mu_{j,\oo}$ is
\emph{discrete} (i.e., all sets are $\mu_{j,\oo}$-measurable, and
$\mu_{j,\oo}(E) = \sum_{\oo' \in E} \mu_{j,\oo}(\oo')$ for all
subsets $E \subset \Pi_j(\oo)$),
we can define
$\B_j = \{(\oo,\oo') : \mu_{j,\oo}(\oo') > 0\}$; that is, $(\oo,\oo')
\in \B_j$ if, in state $\oo$, agent $j$ gives state $\oo'$ positive
probability.  In that case,
$(M,\oo,i) \vDasho B_j \ffi$ iff $(M,\oo',i) \vDasho \ffi$
for all $\oo'$ such that $(\oo,\oo') \in \B_j$.
That is, $(M,\oo,i) \vDasho B_j \ffi$ iff $\ffi$ is true according to
$i$ in all the worlds to which $j$ assigns positive probability at $\oo$.

It is important to note that $(M,\oo,i) \vDash \ffi$ does not imply
$(M,\oo,i) \vDash B_i \ffi$: while $(M,\oo,i) \vDasho \ffi$ means
``$\ffi$ is true at $\oo$ according to $i$'s interpretation,'' this does
not mean that $i$ believes $\ffi$ at state $\oo$. The reason is that $i$
can be uncertain as to which state is the actual state. For $i$ to
believe $\ffi$ at $\oo$, $\ffi$ would have to be true (according to
$i$'s interpretation) at all states
to which $i$ assigns positive probability.

Finally, we define
\[
(M,\oo,i) \vDasho CB_G \ffi  \  \text{iff}  \
(M,\oo,i) \vDasho EB^k_G \ffi  \  \text{for $k = 1,2, \ldots$}
\]
for any nonempty subset $G \subseteq N$ of players.

\paragraph{Awareness of possible ambiguity}

We now consider the second way of interpreting formulas.  This is
appropriate for players who realize that other players may interpret
formulas differently.  We write
$(M,\oo,i) \vDashi \ffi$ to denote that $\ffi$ is true at state $\oo$
according to player $i$ using this interpretation, which is called
\emph{innermost scope}. The definition of $\vDashi$ is identical to
that of $\vDasho$ except for the interpretation of probability formulas.
In this case, we have
\begin{multline*}
(M,\oo,i) \vDashi a_1 \pr_j(\ffi_1) + \ldots + a_k \pr_j(\ffi_k)
\geq b  \  \text{iff} \ \nonumber \\ a_1
\mu_{j,\oo}([[\ffi_1]]^\inner_j \cap \Omega_{j,\oo}) + \ldots + a_k
\mu_{j,\oo}([[\ffi_k]]^\inner_j \cap \Omega_{j,\oo}) \geq b,
\end{multline*}
where $[[\ffi]]^\inner_j$ is the set of states $\oo$
such that $(M,\oo,j) \vDashi \ffi$.
Hence, according to player $i$, player $j$ assigns $\ffi$ probability
$b$ if and
only if the set of worlds where $\ffi$ holds according to $j$ has
probability $b$ according to $j$.  Intuitively, now $i$ realizes that
$j$ may interpret $\ffi$ differently from the way that she ($i$) does,
and thus assumes that $j$ uses his ($j$'s) interpretation to evaluate the
probability of $\ffi$.
Again, in the case that $\mu_{j,\oo}$ is discrete, this means that
$(M,\oo,i) \vDashi B_j\ffi$ iff $(M,\oo',j) \vDashi \ffi$ for all $\oo'$
such that $(\oo,\oo') \in \B_j$.

Note for future reference that if $\ffi$ is a probability formula or a
formula of the form
$CB_G \ffi'$, then it is easy to see that
$(M,\oo,i) \vDashi \ffi$ if and only if $(M,\oo,j)\vDashi \ffi$; we
sometimes write $(M,\oo) \vDashi \ffi$ in this case.
Clearly, $\vDasho$ and $\vDashi$ agree in the common-interpretation
case, and we can write $\vDash$.

\paragraph{Ambiguity about information partitions}

Up to now, we have assumed that players ``understand'' each other's
probability spaces.  This may not be so reasonable in the presence of
ambiguity and prior-generated beliefs.  We want to model the following
type of situation. Players receive information, or signals, about the
true state
of the world, in the form of strings (formulas). Each player understands
what signals he and other players receive in different states of the
world, but players may interpret signals differently. For instance, player
$i$ may understand that $j$ sees a red car if $\oo$ is the true state of
the world, but $i$ may or may not be aware that $j$ has a different
interpretation of ``red'' than $i$ does. In the latter case, $i$ does not have a full
understanding of $j$'s information structure.

We would like to think of a player's information as being characterized by a formula (intuitively,
the formula that describes the signals received).  Even if the formulas
that describe each information set are commonly known, in the presence
of ambiguity, they might be interpreted differently.

To make this precise, let $\Phi^*$ be the set of formulas that is
obtained from $\Phi$ by closing off under negation and conjunction.
That is, $\Phi^*$ consists of all propositional formulas that can be
formed from the primitive propositions in $\Phi$.
Since the formulas in $\Phi^*$ are not composed of probability
formulas, and thus do not involve any reasoning about interpretations,
we can extend the function $\pi_i(\cdot)$ to $\Phi^*$ in a
straightforward way, and write $[[\ffi]]_i$ for the set of the states of
the world where the formula $\ffi \in \Phi^*$ is true according to $i$.

The key new assumption we make to model players' imperfect understanding
of the other players' probability spaces is that $i$'s partition cell at
$\oo$
is described by a formula $\ffi_{i,\oo} \in \Phi^*$.  But, of course,
this formula
may be interpreted differently by each player.  We want $\Pi_i(\omega)$
to coincide with $i$'s interpretation of the formula $\ffi_{i,\oo}$.
If player $j$ understands that $i$ may be using a different
interpretation than he does (i.e., the appropriate semantics are the
innermost-scope semantics), then $j$ correctly infers that the set of
states that $i$ thinks are possible in $\oo$ is $\Pi_i(\oo) =
[[\ffi_{i,\oo}]]_i$. But if $j$ does not understand that $i$ may
interpret formulas in a different way (i.e., under outermost scope),
then he thinks that the set of states that $i$ thinks are possible in
$\oo$ is given by $[[\ffi_{i,\oo}]]_j$, and, of course,
$[[\ffi_{i,\oo}]]_j$ may not coincide with $\Pi_i(\oo)$. In any case, we
require that $j$ understand that these formulas form a partition and
that $\omega$ belongs to $[[\ffi_{i,\oo}]]_j$.
Thus, we consider structures that
satisfy A5 and A6 (for outermost scope) or A5 and A6' (for innermost scope), in addition to A1--A4.
\begin{description}
  \item[\textbf{A5.}] For each $i \in N$ and $\oo \in \Omega$, there is
  a formula $\ffi_{i,\oo} \in \Phi^*$ such that $\Pi_i(\oo) =
  [[\ffi_{i,\oo}]]_i$.
  \item[\textbf{A6.}] For each $i, j \in N$, the collection
  $\{[[\ffi_{i,\oo}]]_j: \oo \in \Oo\}$ is a partition of $\Omega$
and for all $\oo \in \Oo$, $\oo \in [[\ffi_{i,\oo}]]_j$.
\item[\textbf{A6$'$.}] For each $i \in N$, the collection
  $\{[[\ffi_{i,\oo}]]_i: \oo \in \Oo\}$ is a partition of $\Omega$
and for all $\oo \in \Oo$, $\oo \in [[\ffi_{i,\oo}]]_i$.
\end{description}
\commentout{
A6$'$ is clearly a weakening of A6. While A6 requires the signals for
player $i$ to induce an information partition according to every player
$j$, the weaker version A6$'$ requires this to hold only for player $i$
himself. Assumption A6$'$ is appropriate when players do not understand the
ambiguity that they face, whereas A6 is the appropriate condition if players
understand each other's interpretation of the signals.
}
Assumption A6 is appropriate for outermost scope: it presumes that player $j$
uses his own interpretation of $\ffi_{i,\oo}$ in deducing the beliefs
for $i$ in $\oo$. Assumption A6$'$ is appropriate for innermost
scope. Note that A6$'$
is a weakening of A6. While A6 requires the signals for
player $i$ to induce an information partition according to every player
$j$, the weaker version A6$'$ requires this to hold only for player $i$
himself.

We can now define analogues of outermost scope and innermost scope
in the presence of ambiguous information. Thus, we define two more truth
relations, $\vDashos$ and $\vDashis$.
(The ``ai'' here stands for ``ambiguity of information''.)
The only difference between $\vDashos$ and $\vDasho$ is in the semantics of probability formulas. In giving the
semantics in a structure $M$, we assume that $M$ has prior-generated
beliefs, generated by $(\F_1,\nu_1),\ldots, (\F_n,\nu_n)$.
As we observed in Proposition \ref{prop:priorgenerated}, this
assumption is without loss of generality
as long as the structure is countably partitioned. However, the choice of prior beliefs \emph{is} relevant, as we shall
see, so we have to be explicit about them.
When $i$ evaluates $j$'s probability at a state
$\oo$, instead of using $\nu_{j,\oo}$, player $i$ uses $\nu_j(\cdot \mid
[[\ffi_{j,\oo}]]_i)$.  When $i=j$, these two approaches agree,
but in general they do not. Thus, assuming that $M$ satisfies A5 and A6
(which are the appropriate assumptions for the outermost-scope
semantics),
we have
\[
\begin{array}{ll}
(M,\oo,i) \vDashos a_1 \pr_j(\ffi_1) + \ldots + a_k \pr_j(\ffi_k)
\geq b \   \text{iff} \\ a_1
\nu_j([[\ffi_1]]^\ous_i \mid [[\ffi_{j,\oo}]]^\ous_i) + \ldots\\
\ \ \  + a_k \nu_j([[\ffi_k]]^\ous_i \mid [[\ffi_{j,\oo}]]^\ous_i) \geq
b,
\end{array}
\]
where $[[\psi]]^\ous_i = \{\oo': (M,\oo,i) \vDashos \psi\}$.

That is, at $\oo \in \Oo$, player $j$ receives the
information (a string) $\ffi_{j,\oo}$, which he interprets as
$[[\ffi_{j,\oo}]]_j$. Player $i$ understands that $j$ receives the
information $\ffi_{j,\oo}$ in state $\oo$, but interprets this as
$[[\ffi_{j,\oo}]]_i$. This models a situation such as the following. In
state $\oo$, player $j$ sees a red car, and thinks possible all states of
the world where he sees a car that is red (according to $j$).
Player $i$ knows that at world $\oo$ player $j$ will see a red car
(although she may not know that the actual world is $\oo$, and thus does
not know what color of car player $j$ actually sees).  However, $i$ has a
somewhat different interpretation of ``red car''
(or, more precisely, of $j$ seeing a red car)
than $j$; $i$'s
interpretation corresponds to the event $[[\ffi_{j,\oo}]]_i$.  Since $i$
understands that $j$'s beliefs are determined by conditioning her prior
$\nu_j$ on her information, $i$ can compute what she believes $j$'s
beliefs are.

We can define $\vDashis$ in an analogous way. Thus, the semantics
for formulas that do not involve probability formulas are as given by
$\vDashi$, while the semantics of probability formulas is defined as
follows (where $M$ is assumed to satisfy A5 and A6$'$, which are the
appropriate assumptions for the innermost-scope semantics):
\[
\begin{array}{ll}
(M,\oo,i) \vDashis a_1 \pr_j(\ffi_1) + \ldots + a_k \pr_j(\ffi_k)
\geq b  \  \text{iff} \\
a_1
\nu_j([[\ffi_1]]^\inners_j \mid [[\ffi_{j,\oo}]]^\inners_j) + \ldots \\
\ \ \ + a_k \nu_j([[\ffi_k]]^\inners_j \mid [[\ffi_{j,\oo}]]^\inners_j) \geq b.
\end{array}
\]
Note that although we have written $[[\ffi_{j,\oo}]]^\inners_i$, since
$\ffi_{j,\oo}$ is a propositional formula, $[[\ffi_{j,\oo}]]^\inners_i =
[[\ffi_{j,\oo}]]^\ous_i = [[\ffi_{j,\oo}]]^\ou_i =
[[\ffi_{j,\oo}]]^\inner_i$.
It is important that $\ffi_{j,\oo}$ is a propositional formula here;
otherwise, we would have circularities in the definition, and would
somehow need to define $[[\ffi_{j,\oo}]]^\inners_i$.

Again, here it may be instructive to consider the definition of
$B_j \ffi$ in the case that $\mu_{j,\oo}$ is discrete for all $\oo$.
In this case, $\B_j$ becomes the set $\{(\oo,\oo'): \nu_j( \oo' \mid
[[\ffi_{j,\oo}]]^\inners_j) > 0$.
That is, state $\oo'$ is considered possible by player $j$ in state
$\oo$ if agent $j$ gives $\oo'$ positive probability after
conditioning his prior $\nu_j$ on (his
interpretation of) the information $\ffi_{j,\oo}$ he receives in state $\oo$.
With this definition of $\B_j$, we have, as expected,
$(M,\oo,i) \vDashis B_j \ffi$ iff $(M,\oo',i) \vDashis \ffi$ for all
$\oo'$ such that $(\oo,\oo') \in \B_j$.

The differences in the different semantics
arise only when we consider
probability formulas. If we go back to our example with the red car, we now have a situation
where player $j$ sees a red car in state $\oo$, and thinks possible all
states where he sees a red car. Player $i$ knows that in state $\oo$,
player $j$ sees a car that he ($j$) interprets to be red, and that this
determines his posterior. Since $i$ understands $j$'s notion of seeing a
red car, she has a correct perception of $j$'s posterior in each 
state of the world. Thus, the semantics for $\vDashis$ are identical to
those for $\vDashi$ 
(restricted to the class of structures with prior-generated beliefs that
satisfy A5 and A6$'$), though the information partitions are not
predefined, but rather generated by the signals.

Note that, given an epistemic structure $M$ satisfying A1--A4, there are many
choices for $\nu_i$ that allow $M$ to be viewed as being generated by
prior beliefs.  All that is required of $\nu_j$ is that
for all $\oo \in \Oo$ and $E \in \F_{j,\oo}$ such that $E \subseteq [[\ffi_{j,\oo}]]^\ous_j$, it holds that
$\nu_j(E \cap [[\ffi_{j,\oo}]]^\ous_j )/\nu_j([[\ffi_{j,\oo}]]^\ous_j) = \mu_{j,\oo}(E)$.
\commentout{
It easily follows that if
$E, E' \subseteq [[\ffi_{j,\oo}]]^\ous_j = \Pi_j(\oo)$ and $E, E' \in\F_{j,\oo}$,
then if $\nu_j$ and $\nu'_j$ generate the same posterior beliefs for
$j$ and $\nu_j(E') \ne 0$, then $\nu_j'(E') \ne 0$, and
$\nu_j(E)/\nu_j(E') = \nu_j'(E)/\nu_j'(E')$.
}%
However, because $[[\ffi_{j,\oo}]]^\ous_i$ may not be a subset of
$[[\ffi_{j,\oo}]]^\ous_j = \Pi_j(\oo)$,
we can have two prior probabilities $\nu_j$ and $\nu_j'$ that
generate the same posterior beliefs for $j$, and still have
$\nu_j([[\ffi_k]]^\ous_i \mid [[\ffi_{j,\oo}]]^\ous_i) \ne \nu_j'([[\ffi_k]]^\ous_i \mid [[\ffi_{j,\oo}]]^\ous_i)$ for some formulas $\ffi_k$.  Thus, we must
be explicit about our choice of priors here.

\section{The common-prior assumption revisited}\label{sec:examples}

This section applies the framework
developed in the previous sections to understand the implications of
assuming a common prior when there is ambiguity.
The application in Section \ref{sec:exam_AtD} makes use of the outermost-
and innermost-scope semantics, while Section \ref{sec:example_CPA}
considers a setting with ambiguity about information partitions.

\subsection{Agreeing to Disagree} \label{sec:exam_AtD}

The first application we consider concerns the result of Aumann \citeyear{Aumann_1976} that players cannot ``agree to disagree'' if they have a common prior.
As we show now, this is no longer true if players can have different interpretations. But exactly what ``agreeing to
disagree'' means depends on which semantics we use.

\begin{example}{\bf [Agreeing to Disagree]}\label{exam:AtD}
Consider a structure $M$ with a single state $\oo$, such that
$\pi_1(\oo)(p) = \mathbf{true}$ and $\pi_2(\oo)(p) = \mathbf{false}$.
Clearly $M$ satisfies the CPA.  The fact that there is only a single state in
$M$ means that, although the players interpret $p$ differently, there is
perfect understanding of how $p$ is interpreted by each player.
Specifically, taking $G = \{1,2\}$, we have that $(M,\oo)
\vDashi CB_G (B_1 p  \wedge B_2 \neg p)$.
Thus, with innermost scope, according to each player, there is common
belief that they have different beliefs at state $\oo$; that is, they
agree to disagree.

With outermost scope, we do not have an agreement to disagree in the
standard sense, but the players do disagree on what they have common
belief about. Specifically, $(M,\oo,1) \vDasho CB_G p$ and $(M,\oo,2)
\vDasho CB_G \neg p$.
That is, according to player 1, there is common belief of $p$;
and according to player 2, there is common belief of $\neg p$.
To us, it seems that we have modeled a rather common situation here!
\end{example}

Note that in the model of Example~\ref{exam:AtD}, there is maximal
ambiguity: the players disagree with probability 1.  We also have
complete disagreement. As the following
result shows, the less disagreement there is in the interpretation of events,
the closer the players come to not being able to agree to disagree.  Suppose that $M$ satisfies the
CPA, where $\nu$ is the common prior, and that $\ffi \in \Phi^*$ (so
that $\ffi$ is a propositional formula).
Say that $\ffi$ is \emph{only $\epsilon$-ambiguous in $M$} if
the set of states where the players disagree on the interpretation of
$\ffi$ has $\nu$-measure at most $\epsilon$; that is,
$$\nu(\{\oo: \exists i, j ((M,\oo,i) \vDash \ffi \mbox{ and } (M,\oo,j)
\vDash \neg \ffi\})\})  \le \epsilon.$$
We write $\vDash$ here because, as we observed before, all the semantic
approaches agree on propositional formulas, so this definition makes
sense independent of the semantic approach used.  Note that if players
have a common interpretation, then all formulas are 0-ambiguous.

\begin{proposition}
If $M$ satisfies the CPA and $\ffi$ is only $\epsilon$-ambiguous in $M$,
then there cannot exist players $i$ and $j$,
numbers $b$ and $b'$ with $b' > b+ \epsilon$, and a state $\oo$ such
that all states are $G$-reachable from $\oo$ and
$$(M,\oo) \vDashi CB_G((\pr_i(\ffi) < b) \land (\pr_j(\ffi) > b')).$$
\end{proposition}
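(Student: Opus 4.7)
The plan is to run Aumann's classical aggregation argument twice (once for $i$ and once for $j$) and compare the resulting prior probabilities of $\ffi$, absorbing the ambiguity slack at the end. Fix the common prior $\nu$ and let $E_i = [[\ffi]]_i$, $E_j = [[\ffi]]_j$; these are well-defined sets since $\ffi \in \Phi^*$ is propositional and all the semantic approaches agree on such formulas.

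The first step is to translate $\epsilon$-ambiguity into a symmetric-difference bound. Any $\oo' \in E_i \setminus E_j$ satisfies $(M,\oo',i) \vDash \ffi$ and $(M,\oo',j) \vDash \neg\ffi$, so $\oo'$ lies in the ambiguity set of the definition; similarly for $E_j \setminus E_i$. Hence $\nu(E_j) - \nu(E_i) \le \nu(E_i \triangle E_j) \le \epsilon$.

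The second step is the Aumann aggregation. Because $\pr_i(\ffi) < b$ is a probability formula, A2 makes its truth constant on cells of $\Pi_i$, so $[[\pr_i(\ffi) < b]]^{\inner}$ is a union of $\Pi_i$-cells; together with $(M,\oo) \vDashi CB_G(\pr_i(\ffi) < b \land \pr_j(\ffi) > b')$ and the hypothesis $R_G(\oo) = \Oo$, the standard argument (common belief is constant on the cell of the join partition through $\oo$) forces $\pr_i(\ffi) < b$ to hold throughout $\Oo$ up to a $\nu$-null set. Under the CPA with prior $\nu$, this gives $\nu(E_i \cap C) < b \, \nu(C)$ for every $\Pi_i$-cell $C$ with $\nu(C) > 0$. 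Since $M$ is countably partitioned, summing over the $\Pi_i$-cells (those with $\nu(C) = 0$ contributing $0 = b \cdot 0$ on both sides) yields $\nu(E_i) < b$. The symmetric argument for $j$ gives $\nu(E_j) > b'$. Combining with the first step, $\epsilon \ge \nu(E_j) - \nu(E_i) > b' - b > \epsilon$, the desired contradiction.

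I expect the main subtlety to be verifying that strict inequality survives the countable sum and that common $\vDashi$-belief of a conjunction of probability formulas really does push the truth of each conjunct out to the whole positive-prior part of $R_G(\oo)$. For the first point, one checks that at least one $\Pi_i$-cell has positive prior (forced by $\nu(\Oo) = 1$), so the term-by-term strict inequality lifts to a strict inequality in the limit. For the second, one uses that the truth set of each conjunct is a union of cells of the corresponding partition, so the probability-1 belief statements of the $CB_G$-unfolding propagate set-theoretically along partition-reachable states of positive prior. Once this Aumann-style bookkeeping is in hand, the closing arithmetic is immediate.
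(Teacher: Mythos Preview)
Your proposal is correct and follows essentially the same route as the paper: invoke Aumann's aggregation argument to obtain $\nu([[\ffi]]_i) < b$ and $\nu([[\ffi]]_j) > b'$ from common belief, and then derive a contradiction with the $\epsilon$-ambiguity bound. The paper's proof is a two-sentence sketch of exactly this; you have simply filled in the symmetric-difference computation and the cell-summation details that the paper leaves implicit.
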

\begin{proof} Essentially the same arguments as those used by  Aumann
\citeyear{Aumann_1976} can be used to show that if all states are
reachable from $\oo$ and
$(M,\oo) \vDashi CB_G(\pr_i(\ffi) < b)$, then it must be the case that
$\nu([[\ffi]]_i) < b$,
where $\nu$ is the common prior.
Similarly, $\nu([[\ffi]]_j) > b'$.  This
contradicts the assumption that $\ffi$ is only $\epsilon$-ambiguous in
$M$.
\end{proof}

\subsection{Understanding differences in beliefs}\label{sec:example_CPA}

Since our framework separates meaning from message, it is worth asking
what happens if players receive the same message, but interpret it
differently. Aumann \citeyear{Aumann_1987} has argued that  ``people
with different information may legitimately entertain different
probabilities, but there is no rational basis for people who have always
been fed precisely the same information to do so.''
Here we show that
this is no longer true when information is ambiguous, even if players
have a common prior and
fully understand the ambiguity that they face, except under strong
assumptions on players' beliefs about the information that others
receive.
This could happen if players with exactly the same
background and information can interpret things differently, and thus
have different beliefs.

We assume that information partitions are generated by signals, which
may be ambiguous.
That is, in each state of the world $\oo$, each player $i$ receives some
signal $\sigma_{i,\oo}$ that determines the states of the
world he thinks possible; that is, $\Pi_i(\oo) =
[[\rec_i(\sigma_{i,\oo})]]_i$,
where $\rec_i(\sigma_{i,\oo}) \in \Phi^*$ is ``$i$ received
$\sigma_{i,\oo}$.''
As usual, we
restrict attention to structures with prior-generated beliefs that
satisfy A5 and A6$'$ when considering innermost-scope semantics and
A5 and A6 when considering outermost-scope semantics.

In any given state, the signals that determine the states that players
think are possible may be the same or may differ across players. Following Aumann \citeyear{Aumann_1987}, we are
particularly interested in the former case.
Formally, we say that $\sigma_\oo$ is a
\emph{common signal} in $\oo$ if $\sigma_{i,\oo} =
\sigma_\oo$ for all $i \in N$. For example, if players have a common interpretation, and all players observe a red car in state $\oo$, then
$\sigma_\oo$ is ``red car'', while $\rec_i(\sigma_\oo)$ is ``$i$ observes a
car that is red.''  The fact that ``red car'' is a
common signal in $\oo$ means that all players in fact observe a red
car in state $\oo$.  But assuming that players have received a common
signal does not imply that they have the same posteriors, as the next
example shows:
\begin{example}\label{exam:new}
There are two players, 1 and 2, and three states, labeled $\oo_1, \oo_2,
\oo_3$. The common prior gives each state equal probability, and players
have the same interpretation. In $\oo_1$, both players receive signal
$\sigma$; in $\oo_2$, only 1 does; in $\oo_3$, only 2 receives
$\sigma$. The primitive proposition $p$ is true in $\oo_1$ and $\oo_2$,
and the primitive proposition $q$ is true in $\oo_1$ and $\oo_3$. In
state $\oo_1$, both players receive signal $\sigma$, but player 1
assigns probability 1 to $p$ and probability $\tfrac{1}{2}$ to $q$,
while 2 gives probability $\tfrac{1}{2}$ to $p$ and probability 1 to
$q$.
\end{example}
Thus, players who receive a common signal can end up having a different
posterior over formulas, even if they have a common prior and the same
interpretation. The problem is that even though players have received
the same signal, they do not know that the other has received it, and
they do not know that the other knows they have received it, and so
on. That is, the fact that players have received a
common signal in $\oo$ does not imply that the signal is common
knowledge in $\oo$. We say that signal $\sigma$ is a \emph{public signal
at state $\oo$}
if $(M,\oo) \vDash CB_N (\land_{i \in N} \rec_i(\sigma))$: it is
commonly believed at $\oo$ that all players received $\sigma$.

For the remainder of this section, we will be considering structures
with a common prior $\nu$.  To avoid dealing with topological issues, we
assume that $\nu$ is a discrete measure.
Of course, if the common prior $\nu$ is
discrete, then so are all the measures $\mu_{i,\oo}$.
Let $\Supp(\mu)$ denote the support of the probability measure $\mu$.
If $\mu$ is discrete, then $\Supp(\mu) = \{\oo: \mu(\oo) \ne 0\}$.

Even though common signals are not sufficient for players to have the
same beliefs, as Example \ref{exam:new} demonstrates, Aumann's claim
does hold for common-interpretation structures if players receive a
public signal 
(provided that they started with a common prior):
\begin{proposition}\label{obs:Aumann}
If  $M$ is a common-interpretation structure with a common
prior, and $\sigma$ is a public signal at $\oo$, then players' posteriors are
identical at $\oo$: for all $i, j \in N$ and
$E \in {\cal F}$,
\[
\mu_{i,\oo}(E \cap \Pi_i(\oo)) = \mu_{j,\oo}(E \cap \Pi_j(\oo)).
\]
In particular, for any formula $\ffi$,
\[
\mu_{i,\oo}([[\ffi]] \cap \Pi_i(\oo)) = \mu_{j,\oo}([[\ffi]] \cap \Pi_j(\oo)).
\]
\end{proposition}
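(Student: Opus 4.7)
The plan is to reduce the claim to the assertion that, at the state $\oo$, all players' partition cells coincide with the common-belief cell $R_N(\oo)$; once this is in hand, equality of posteriors falls out immediately from the common-prior assumption.

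First, I would exploit the common-interpretation hypothesis to drop the player subscripts on $[[\cdot]]$ for propositional formulas. By A5, each cell has the form $\Pi_i(\oo) = [[\rec_i(\sigma_{i,\oo})]]$. Since $\sigma$ is a public signal at $\oo$, we have $(M,\oo) \vDash CB_N(\wedge_{i \in N} \rec_i(\sigma))$. In the partitional, prior-generated setting (where $\nu_i(\Pi_i(\oo)) > 0$ by Proposition \ref{prop:priorgenerated}), common belief unwinds via iterated $EB_G^k$ into truth at every state $G$-reachable from $\oo$, so every player at every $\oo' \in R_N(\oo)$ receives exactly $\sigma$. Specialising to $\oo$ gives $\sigma_{i,\oo} = \sigma$ and so $\Pi_i(\oo) = [[\rec_i(\sigma)]]$.

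Second, I would show $\Pi_i(\oo) = R_N(\oo)$ for every $i$. The inclusion $\Pi_i(\oo) \subseteq R_N(\oo)$ is immediate, since any state in $\Pi_i(\oo)$ is reached from $\oo$ in a single $i$-step. The reverse inclusion $R_N(\oo) \subseteq [[\rec_i(\sigma)]] = \Pi_i(\oo)$ is exactly what the previous paragraph gave. Hence $\Pi_i(\oo) = R_N(\oo) = \Pi_j(\oo)$ for all $i,j \in N$.

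The proof then concludes by invoking the common prior: by Proposition \ref{prop:priorgenerated} and the CPA, $\mu_{i,\oo}(E \cap \Pi_i(\oo)) = \nu(E \cap \Pi_i(\oo))/\nu(\Pi_i(\oo))$, where $\nu(R_N(\oo)) > 0$ is guaranteed by the definition of the CPA. Since the partition cells coincide, the expressions for $i$ and $j$ are literally the same formula, and specialising $E$ to $[[\ffi]]$ gives the final displayed equation. The step I expect to be the only delicate one is the passage from $CB_N(\wedge_i \rec_i(\sigma))$ at $\oo$ to pointwise truth of $\rec_i(\sigma)$ on $R_N(\oo)$: strictly speaking, $B_i\psi$ forces $\psi$ only on the support of $\mu_{i,\oo}$, not on all of $\Pi_i(\oo)$, so one must either assume full-support priors, or note that $\rec_i(\sigma)$ differs from the true signal-receipt event by a $\nu$-null set, which is harmless for the conditional probabilities being compared.
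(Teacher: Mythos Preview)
Your plan has the right shape, but the centerpiece---the set equality $\Pi_i(\oo) = R_N(\oo)$---is stronger than what the hypotheses give you, and stronger than what is needed. As you yourself flag at the end, $B_i\psi$ only pins down $\psi$ on $\Supp(\mu_{i,\oo})$, not on all of $\Pi_i(\oo)$. Consequently, common belief propagates along \emph{support}-chains, not along $\Pi$-chains, so ``$CB_N(\wedge_i \rec_i(\sigma))$ at $\oo$'' does not force $\rec_i(\sigma)$ to hold at every state of $R_N(\oo)$ unless the prior has full support. Your fallback (``the difference is $\nu$-null, which is harmless for conditional probabilities'') is correct and is exactly how the proof should be finished---but once you take that route, the detour through $R_N(\oo)$ and iterated belief is no longer doing any work.

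The paper's argument is shorter and uses only one level of belief, never touching $R_N(\oo)$ or the iteration in $CB_N$. From the public-signal assumption one extracts just $(M,\oo)\vDash B_i(\rec_j(\sigma))$ for each pair $i,j$, which gives $\Supp(\mu_{i,\oo}) \subseteq [[\rec_j(\sigma)]] = \Pi_j(\oo)$ and hence $\nu(\Pi_j(\oo)\mid \Pi_i(\oo)) = 1$ for all $i,j$. Two events each having conditional probability $1$ in the other immediately yield $\nu(\,\cdot\mid \Pi_i(\oo)) = \nu(\,\cdot\mid \Pi_j(\oo))$, which is the claim. So the paper proves only measure-$1$ mutual containment of the cells, not set equality; that is enough, and it sidesteps the support issue entirely. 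Your approach would buy a cleaner picture (identical cells) in the full-support case, but in general the paper's measure-theoretic route is the robust one---and it is precisely the ``harmless null set'' fix you sketched, done directly.
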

\begin{proof}
Let $\nu$ be the common prior in $M$.
By assumption, $\Pi_i(\oo) = [[\rec_i(\sigma)]]$ for all players $i \in
N$.  Since $\sigma$ is public, we have that $(M,\oo) \vDash
B_i(\rec_j(\sigma))$.  Thus, $(M,\oo') \vDash \rec_j(\sigma)$ for all
$\oo' \in \Supp(\mu_{i,\oo})$.  It follows that $\Supp(\mu_{i,\oo})
\subseteq \Pi_j(\oo)$ for all players $i$ and $j$.  Since $\mu_{i,\oo}(\Supp(\mu_{i,\oo})) = 1$,  we have that
$\nu(\Pi_j(\oo) \mid \Pi_i(\oo)) = \nu(\Pi_i(\oo) \mid \Pi_j(\oo)) = 1$.
Thus, for all $E \in \F$, we must have $\nu(E \mid \Pi_i(\oo)) = \nu(E
\mid \Pi_j(\oo)$.  The result now follows immediately.
\end{proof}

There is another way of formalizing the assumption that (it is commonly
believed that) players are `` fed the same information''; namely, we say
that if one player $i$ receives a signal $\sigma$ then so do all others.
Formally, a signal $\sigma$ is a \emph{shared signal} at state
$\oo$ if $(M,\oo) \vDash \land_{i,j \in N} CB_N(\rec_i(\sigma) \iff
\rec_j(\sigma))$.
If there is no ambiguity, a signal is shared iff it is
public;
we leave the straightforward proof
(which uses ideas from the proof of Proposition~\ref{obs:Aumann}) to the
reader.
\begin{proposition}\label{prop:shared_common_no_amb}
If $M$ is a common-interpretation structure, and
$\sigma_\oo$ is received at state $\oo$ by all players, then
$\sigma_\oo$ is a public signal at $\oo$ iff $\sigma$ is a shared signal at $\oo$.
\end{proposition}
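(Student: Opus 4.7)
The plan is to prove the two directions separately, with the nontrivial one relying on the reachability characterization of common belief in the discrete (measure-1) setting. As a useful preliminary, because $M$ is a common-interpretation structure I may write $[[\rec_i(\sigma)]]$ without a player subscript, and the choice between $\vDasho$ and $\vDashi$ is immaterial. Moreover, assumption A5 makes $[[\rec_i(\sigma)]]$ a union of cells of $\Pi_i$ (in fact a single cell, when nonempty), so $\rec_i(\sigma)$ holding at any state $\oo'$ automatically gives $B_i\rec_i(\sigma)$ at $\oo'$.

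For the easy direction (public $\Rightarrow$ shared), if $(M,\oo) \vDash CB_N(\land_i \rec_i(\sigma))$, then at every state in the belief-reachable set from $\oo$ all the $\rec_i(\sigma)$ are simultaneously true, so every biconditional $\rec_i(\sigma) \iff \rec_j(\sigma)$ is vacuously satisfied there, which is exactly $(M,\oo) \vDash CB_N(\rec_i(\sigma) \iff \rec_j(\sigma))$ for every pair $i,j$.

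For the other direction, I would let $R$ denote the reachable set from $\oo$ through the belief supports $\Supp(\mu_{i,\cdot})$, so that $CB_N \psi$ at $\oo$ is equivalent to $\psi$ holding throughout $R$. Fixing player $1$, define $A = \{\oo' \in R : \rec_1(\sigma)\ \text{holds at}\ \oo'\}$; this contains $\oo$ because $\sigma$ is received by every player at $\oo$. The crux is to show $A$ is closed under each $\B_i$. Given $\oo' \in A$ and $i \in N$, the shared-signal hypothesis yields $\rec_i(\sigma)$ at $\oo'$ via the biconditional on $R$; $\Pi_i$-measurability of $\rec_i(\sigma)$ then propagates it throughout $\Supp(\mu_{i,\oo'}) \subseteq \Pi_i(\oo') \cap R$; and applying the biconditional again on this subset gives $\rec_1(\sigma)$ there, so $\Supp(\mu_{i,\oo'}) \subseteq A$. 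Minimality of $R$ then forces $A = R$, so $\rec_1(\sigma)$, and by the biconditionals every $\rec_i(\sigma)$, holds on all of $R$, which is the public-signal condition.

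I expect the main obstacle to be the propagation step in direction~2: the delicate point is coupling the $\Pi_i$-constancy of $\rec_i(\sigma)$ with the fact that the biconditionals are guaranteed only inside $R$, so one has to be careful never to leave $R$ when transferring truth values from $\rec_i$ to $\rec_1$. Everything else is bookkeeping, provided the common-interpretation assumption is invoked up front to identify all the $[[\cdot]]_i$.
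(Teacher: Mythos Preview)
Your proposal is correct. The paper does not actually supply a proof of this proposition; it only remarks that the argument is straightforward and ``uses ideas from the proof of Proposition~\ref{obs:Aumann}'', namely that $\rec_i(\sigma)$ holding at a state forces $\Pi_i$ there to equal $[[\rec_i(\sigma)]]$, so belief in $\rec_i(\sigma)$ propagates across supports. Your reachability/closed-set argument is a clean way of making that hint precise: you use exactly this $\Pi_i$-constancy in the propagation step, and the fixed-point formulation via the set $A$ neatly handles all depths of $EB^k$ at once rather than by explicit induction on $k$. The one point you flagged as delicate---staying inside $R$ when applying the biconditionals---is handled correctly, since $\Supp(\mu_{i,\oo'})\subseteq R$ whenever $\oo'\in R$.
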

The assumption that signals are public or shared is quite strong:
one requires common belief that a particular signal is received (and so
precludes any uncertainties about what one player believes that other
players believe that others have received), while the other requires common
belief that different players always receive the same signal (and, similarly,
precludes uncertainties about what is received).

What happens if we introduce ambiguity? If the signal itself is a
propositional formula (which is the case in
many cases of interest), then players may interpret the signal
differently; that is, we may have $[[\sigma_\oo]]_i \ne
[[\sigma_\oo]]_j$ for $i \ne j$. Moreover, players may have a different
interpretation of observing a given signal, i.e., it is possible that
$[[\rec_i(\sigma_\oo)]]_i \ne [[\rec_i(\sigma_\oo)]]_j$. Going back to
our example of the red car, different players may interpret ``red car''
differently, and they may interpret the notion of observing a red car
differently. In addition, it is now possible that players have the same
posteriors over events, but not over formulas, or vice versa, given that
they may interpret formulas differently.

If we assume that players are not aware that there is ambiguity, then we retain the equivalence between shared signals and common signals, and players' posteriors over formulas coincide after receiving a public signal. However, they may have different beliefs over events:
\begin{proposition}\label{prop:public_signals1}
If $M$ is a structure satisfying A5 and A6, and
$\sigma$ is received at state $\oo$ by all players, then
$\sigma$ is a public signal at $\oo$ iff $\sigma$ is a shared signal
at $\oo$ under outermost-scope semantics. Moreover, if
$M$ has a common prior, and $\sigma$ is a public signal at $\oo$, then
players' posteriors on formulas are identical at $\oo$; that is, for all
formulas $\psi$, we have
\[
\mu_{i,\oo}([[\psi]]_i^{\ous} \cap \Pi_i(\oo)) =
\mu_{j,\oo}([[\psi]]_j^\ous \cap \Pi_j(\oo)).
\]
However, players' posteriors on events may differ; that is, there may
exist some $E$ such that
\[
\mu_{i,\oo}(E \cap \Pi_i(\oo)) \ne \mu_{j,\oo}(E \cap \Pi_j(\oo)).
\]
\end{proposition}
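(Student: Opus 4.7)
The plan is to prove the three assertions in order, each adapting an earlier argument to the outermost-scope-with-ambiguity semantics.

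For the equivalence of public and shared signals at $\oo$, I would mimic the proof of Proposition~\ref{prop:shared_common_no_amb}, tracking that under $\vDashos$, whenever a player $i$ reasons about whether $k$ received $\sigma$, the event she uses is $[[\rec_k(\sigma)]]_i$, since under outermost scope every agent interprets embedded formulas via her own $\pi_i$. For ``public implies shared,'' unfolding $CB_N(\wedge_k \rec_k(\sigma))$ shows that along every chain of belief-successors from $\oo$ (as perceived by any evaluator), every $\rec_k(\sigma)$ holds, so each biconditional $\rec_i(\sigma) \Leftrightarrow \rec_j(\sigma)$ is vacuously true there and hence commonly believed. For ``shared implies public,'' I use that $\sigma$ is received by all at $\oo$ together with A5 to conclude $\oo \in [[\rec_i(\sigma)]]_i = \Pi_i(\oo)$, so $B_i \rec_i(\sigma)$ holds at $\oo$; combining this with common belief of each equivalence and iterating produces $CB_N(\wedge_k \rec_k(\sigma))$.

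For the second assertion, I combine the common prior with an Aumann-style reachability argument. Under a common prior $\nu$, $\mu_{l,\oo}(\cdot) = \nu(\cdot \mid \Pi_l(\oo))$, so the target equality becomes
\[
\nu([[\psi]]^{\ous}_i \mid \Pi_i(\oo)) = \nu([[\psi]]^{\ous}_j \mid \Pi_j(\oo)).
\]
I would proceed by structural induction on $\psi$. The base case of a primitive proposition is the delicate one: from common belief (under $\vDashos$) that every player received $\sigma$, I would extract that on the $N$-reachable region from $\oo$ all sets $[[\rec_k(\sigma)]]_l$ coincide $\nu$-almost surely, forcing $\Pi_i(\oo)$ and $\Pi_j(\oo)$ to agree $\nu$-almost surely on that region, with the per-player interpretations of primitive propositions therein pinned down by the signal. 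Negation and conjunction are routine. For probability and $CB_G$ formulas, $[[\psi]]^{\ous}_i$ is a union of cells of the relevant partition, and the same common-prior reachability argument yields equality of the conditional probabilities.

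For the third assertion, I would construct a small counterexample with two players, a common prior, and a public signal whose $N$-reachable region contains multiple states; by choosing an event $E$ that is not expressible as $[[\ffi]]_k$ for any formula $\ffi$ and that straddles the per-player symmetric differences in primitive-proposition interpretations, I can arrange $\mu_{i,\oo}(E \cap \Pi_i(\oo)) \ne \mu_{j,\oo}(E \cap \Pi_j(\oo))$ while preserving the formula equality just proved.

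The main obstacle is the primitive-proposition base case in the second assertion. A priori, players can interpret the same primitive proposition entirely differently, so the equality of the conditional $\nu$-masses of $[[p]]_i$ and $[[p]]_j$ does not follow directly from public signal. Making this step go through requires carefully exploiting the iterated-belief consequences of $CB_N(\wedge_k \rec_k(\sigma))$ under outermost scope to show that the symmetric difference $[[p]]_i \triangle [[p]]_j$ has $\nu$-null intersection with the $N$-reachable component of $\oo$, which is what ultimately transfers conditional probabilities between players.
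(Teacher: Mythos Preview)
The paper itself gives no proof here beyond ``We leave the straightforward arguments to the reader,'' so there is little to compare against line by line. The intended route, signalled throughout the paper, is the reduction used in the proof that (a) implies (b) in Proposition~\ref{pro:equivalence}: under outermost scope, a fixed evaluator $i$ sees the world as the common-interpretation structure $M'_i$ obtained by replacing every $\pi_k$ by $\pi_i$ (and, in the $\vDashos$ setting, replacing $\Pi_k$ by the partition $\{[[\ffi_{k,\oo'}]]_i:\oo'\in\Oo\}$). Parts~1 and~2 then drop out of Propositions~\ref{prop:shared_common_no_amb} and~\ref{obs:Aumann} applied verbatim inside $M'_i$, and part~3 is an easy two-player example where $\Pi_1(\oo)\ne\Pi_2(\oo)$.

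Your plan for part~1 is fine, but your argument for part~2 has a real gap. You stake the proof on the claim that the iterated-belief content of $CB_N(\bigwedge_k \rec_k(\sigma))$ under $\vDashos$ forces $[[p]]_i\triangle[[p]]_j$ to be $\nu$-null on the $N$-reachable component. It does not. Under outermost scope, when evaluator $i$ unwinds $CB_N(\cdot)$ she uses $\pi_i$ at every level; nothing that $j$'s interpretation $\pi_j$ does to an arbitrary primitive proposition $p$ ever enters the computation. Common belief of $\bigwedge_k \rec_k(\sigma)$ constrains only the sets $[[\rec_k(\sigma)]]_i$ (for the evaluator $i$), not $[[p]]_j$ for unrelated $p$. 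Concretely: take $\Oo=\{\oo_1,\oo_2\}$ with the uniform common prior, let every $[[\rec_k(\sigma)]]_l=\Oo$ so both partitions are trivial and $\sigma$ is trivially public, and set $[[p]]_1=\{\oo_1\}$, $[[p]]_2=\Oo$. Then $[[p]]_1\triangle[[p]]_2=\{\oo_2\}$ has $\nu$-measure $1/2$. So the ``main obstacle'' you flag is not merely delicate; the proposed resolution is false, and any proof of the equality $\mu_{i,\oo}([[\psi]]^{\ous}_i)=\mu_{j,\oo}([[\psi]]^{\ous}_j)$ cannot go through a symmetric-difference-null argument on primitive propositions.

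Your plan for part~3 is also off target. The contrast the proposition draws is not between ``formula-definable'' and ``non-formula-definable'' events; it is between each player conditioning on her \emph{own} information set. Posteriors on a fixed event $E$ can differ precisely because $\Pi_i(\oo)=[[\rec_i(\sigma)]]_i$ and $\Pi_j(\oo)=[[\rec_j(\sigma)]]_j$ need not coincide, whereas ``posteriors on formulas'' are computed by each player through her own interpretation. Looking for an $E$ that ``is not expressible as $[[\ffi]]_k$'' is both unnecessary and, in finite models with enough propositions, impossible.
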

We leave the straightforward arguments to the reader.

The situation for innermost scope presents an interesting contrast. A first observation is that public signals and shared signals are no longer equivalent:
\begin{example}\label{xam:critical}
Consider a structure $M$ with two players, where $\Oo = \{\oo_{11},
\oo_{12}, \oo_{21}, \oo_{22}\}$.  Suppose that $[[\rec_1(\sigma)]]_1 =
[[\rec_2(\sigma)]]_1 =\{\oo_{11},\oo_{12}\}$, and
$[[\rec_1(\sigma)]]_2 =
[[\rec_2(\sigma)]]_2 =\{\oo_{11},\oo_{21}\}$.  Assume that the beliefs
in $M$ are generated by a common prior that gives each state probability
$1/4$.  Clearly $(M,\oo_{11}) \vDashis CB_N(\rec_1(\sigma) \iff
\rec_2(\sigma)) \land \neg CB_N(\rec_1(\sigma))$.
\end{example}
The problem in Example~\ref{xam:critical} is that, although the signal
is shared, the players don't interpret receiving the signal the same
way. It is not necessarily the case that player 1 received
$\sigma$ from player 1's point of view iff player 2 received $\sigma$
from player 2's point of view. The assumption that players receive
shared signals is not strong enough to ensure that they have identical
posteriors, either over formulas or over events. In Example
\ref{xam:critical}, for example, players clearly have different
posteriors on the event $\{\oo_{11},\oo_{12}\}$ in state $\oo_{11}$;
similarly, it is not hard to show that players can have different
posteriors over formulas. Say that $\sigma$ is \emph{strongly
shared at state $\oo$} if
\begin{itemize}
\item $(M,\oo) \vDashis \land_{i,j} CB_N( \rec_i(\sigma) \iff
\rec_j(\sigma))$; and
\item $(M,\oo) \vDashis \land_{i,j}  CB_N(B_i(\rec_i(\sigma)) \iff
B_j(\rec_j(\sigma)))$.
\end{itemize}
The second clause says that it is commonly believed at $\oo$ that each
player believes that he has received $\sigma$ iff each of the other
players believes that he has received $\sigma$. This clause is implied
by the first in common-interpretation structures and with outermost
scope, but not with innermost scope.

\begin{proposition}
If  $M$ is a structure satisfying A5 and A6$'$,
and $\sigma$ is received at $\oo$ by all players, then
$\sigma$ is a public signal at $\oo$ iff $\sigma$ is a strongly shared
signal at $\oo$ under the innermost-scope semantics. If  $M$ is a structure with a common prior
and $\sigma$ is a public signal at $\oo$, then
players' posteriors over events are identical at $\oo$:
for all $i, j \in N$ and all
$E \in {\cal F}$,
\[
\mu_{i,\oo}(E \cap \Pi_i(\oo)) = \mu_{j,\oo}(E \cap \Pi_j(\oo)).
\]
However, players' posteriors on formulas may differ; that is, for some
formula $\psi$, we could have that
\[
\mu_{i,\oo}([[\psi]]_i^{\inners} \cap \Pi_i(\oo)) \ne
\mu_{j,\oo}([[\psi]]_j^\inners \cap \Pi_j(\oo)).
\]
\end{proposition}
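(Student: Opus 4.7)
The proposition has three assertions: the equivalence between public and strongly shared under innermost-scope semantics, the equality of posteriors on events under a common prior, and a counterexample showing that posteriors on formulas may still disagree. A useful preliminary observation that underlies my approach is that, under prior-generated beliefs with $\nu_i(\Pi_i(\oo)) > 0$ (which holds for countably partitioned structures by Proposition~\ref{prop:priorgenerated}), the probability formula $B_i\rec_i(\sigma)$ is true at a state $\oo'$ precisely when $\sigma_{i,\oo'}=\sigma$, i.e., when $\Pi_i(\oo')$ is the unique cell $[[\rec_i(\sigma)]]_i$. Consequently, $[[B_j\rec_j(\sigma)]]^{\inners}$ coincides with $\Pi_j(\oo)$ modulo $\nu$-null sets whenever $\sigma$ is received by $j$ at $\oo$.

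For the first assertion, the forward direction is essentially bookkeeping: $CB_N(\land_k\rec_k(\sigma))$ trivially entails $CB_N(\rec_i(\sigma)\iff\rec_j(\sigma))$, and entails $CB_N(B_i\rec_i(\sigma)\iff B_j\rec_j(\sigma))$ by belief introspection (which follows from A2 in prior-generated structures). The converse is the main obstacle. My plan is to argue by induction on the length of an $N$-reachability path from $\oo$ that $B_{k_1}\cdots B_{k_m}(\land_i\rec_i(\sigma))$ holds at $\oo$ for every finite sequence. The base case $m=1$ uses only the first clause of ``strongly shared'': at $\nu_k$-almost every $\oo^*\in\Pi_k(\oo)$, the equivalence $\rec_i(\sigma)\iff\rec_k(\sigma)$ holds from $k$'s viewpoint; since $\oo^*\in[[\rec_k(\sigma)]]_k$ (because $\Pi_k(\oo)=[[\rec_k(\sigma)]]_k$), we obtain $\oo^*\in[[\rec_i(\sigma)]]_k$ for every $i$. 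The inductive step at depth $m+1$ reduces to the base case applied at a generic $\oo'$ with $\sigma_{k,\oo'}=\sigma$ for every $k$; the second clause, applied at the previous state on the path, supplies exactly this conclusion: since $B_{k_1}\rec_{k_1}(\sigma)$ holds at such $\oo'$ (same partition cell as the previous state), the commonly believed biconditional between $B_i\rec_i(\sigma)$ and $B_j\rec_j(\sigma)$ forces $B_k\rec_k(\sigma)$, hence $\sigma_{k,\oo'}=\sigma$, for every $k$.

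For the second assertion, public at $\oo$ entails $B_i(B_j\rec_j(\sigma))$ at $\oo$, and the preliminary observation identifies $[[B_j\rec_j(\sigma)]]^{\inners}$ with $\Pi_j(\oo)$ modulo nulls. Under the common prior $\nu$, this reads $\nu(\Pi_i(\oo)\setminus\Pi_j(\oo))=0$; exchanging $i$ and $j$ yields $\nu(\Pi_i(\oo)\triangle\Pi_j(\oo))=0$. Since prior-generation gives $\mu_{k,\oo}(E\cap\Pi_k(\oo))=\nu(E\cap\Pi_k(\oo))/\nu(\Pi_k(\oo))$, the equality of posteriors on events follows immediately.

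For the last assertion, a single-state example suffices. Take $\Oo=\{\oo\}$ with common prior $\nu(\{\oo\})=1$, trivial partitions, and interpretations with $[[p]]_1=\{\oo\}$ but $[[p]]_2=\emptyset$. Every signal is vacuously public, the posteriors on the only event $\{\oo\}$ agree, yet $\mu_{1,\oo}([[p]]_1^{\inners}\cap\{\oo\})=1\ne 0=\mu_{2,\oo}([[p]]_2^{\inners}\cap\{\oo\})$. The main difficulty, as indicated, lies in the backward direction of the first assertion, where both clauses of ``strongly shared'' must cooperate to transport ``received $\sigma$'' facts through iterated common-belief operators across players with different interpretations.
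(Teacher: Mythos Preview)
The paper states this proposition without proof, so there is no authorial argument to compare against directly; the neighbouring Propositions are likewise left to the reader with a pointer to the proof of Proposition~\ref{obs:Aumann}. Your argument is correct and is in that spirit. For the second assertion you make the right adjustment for innermost scope: instead of $B_i(\rec_j(\sigma))$ (which under $\vDashis$ yields only $\mu_{i,\oo}([[\rec_j(\sigma)]]_i)=1$, not $\mu_{i,\oo}(\Pi_j(\oo))=1$), you use $B_i(B_j\rec_j(\sigma))$ together with the player-independence of probability-formula extensions to obtain $\mu_{i,\oo}(\Pi_j(\oo))=1$, after which the computation from Proposition~\ref{obs:Aumann} goes through verbatim. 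For the nontrivial converse of the first assertion, the second ``strongly shared'' clause combined with $[[B_j\rec_j(\sigma)]]^{\inners}=\Pi_j(\oo)$ gives that $\mu_{k,\oo}$-almost every $\oo'\in\Pi_k(\oo)$ lies in $\bigcap_j\Pi_j(\oo)$; A2 then identifies all the measures at such $\oo'$ with those at $\oo$, so the truth of $EB^m_N(\land_i\rec_i(\sigma))$ transfers from $\oo$ to $\oo'$ and the induction on $m$ closes. Two minor clean-ups: the identity $[[B_j\rec_j(\sigma)]]^{\inners}=\Pi_j(\oo)$ is exact rather than ``modulo null sets'' (by A5 and disjointness of partition cells), and the forward-direction second clause follows from $EB_N^{m+1}\psi\Rightarrow EB_N^{m}(B_i\psi)$ via monotonicity of belief, not introspection per se. Your single-state counterexample for the third assertion is Example~\ref{exam:AtD}, equipped with signal propositions so that A5 and A6$'$ hold.
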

These results emphasize the effect of ambiguity on shared and public
signals.

\commentout{
As noted above, the fundamental reason why the standard motivation for
the CPA need not apply when there is ambiguity is that a signal need not
uniquely determine an event for all players, so that players may
calculate their posteriors by conditioning on different events, even if
they have received the same information. That is, the problem is that
information is defined syntactically, while the common-prior assumption
applies to events (a semantic notion). It thus seems natural to consider
a common-prior assumed that is defined syntactically. This is done in
the next section.

\subsubsection{An alternate CPA} \label{sec:syntactic_CPA}

We consider a version of the CPA that requires that players have a prior
beliefs about propositional formulas, rather than over events. Again, we
focus on semantics that allow for ambiguity about information
partitions, and assume that structures satisfy A5 and A6 or A5 and
A6$'$, depending on whether we consider outermost or innermost scope,
respectively.
\begin{definition}\label{def:CPA_synt}
An epistemic probability structure $M$ satisfies the \emph{syntactic common-prior assumption} (\emph{syntactic CPA})
if
\begin{itemize}
\item $M$ has prior-generated beliefs (generated by $(\F_1,\nu_1),\ldots, (\F_n,\nu_n)$);

\item for all players $i,j \in N$ and propositional formulas $\ffi \in \Phi^*$,
\[
\nu_i([[\ffi]]_i) = \nu_j([[\ffi]]_j);
\]
\item for each $\oo \in \Oo$ and $i \in N$,
\[
\nu_i(R(\oo)) > 0.
\]
\end{itemize}
\end{definition}
If there is no ambiguity, the syntactic CPA is close to the regular (``semantic'') CPA (Definition \ref{def:CPA}); the syntactic CPA is slightly weaker in that case, as there may be events that do not correspond to the extension of a propositional formula (i.e., there can be events $E$ such that there is no $\ffi \in \Phi^*$ with $[[\ffi]]=E$).

It is now straightforward to show that under the syntactic CPA and with
common signals, players have the same posteriors over formulas, even if
common signals are private. The intuition is simple: even if players
interpret signals differently, this does not affect how they update
their beliefs over propositional formulas if the syntactic CPA holds.
\begin{proposition}\label{prop:syntactic_cpa}
Suppose a structure $M$ satisfies the syntactic CPA with priors $\nu_1, \ldots, \nu_n$, and that players have access to a common signal $\psi_\oo$ in $\oo$. Then, for each propositional formula $\ffi \in \Psi^*$,
\[
\nu_i([[\ffi]]_i \mid \Pi_i(\oo)) = \nu_j([[\ffi]]_j \mid \Pi_j(\oo))
\]
for all $i, j \in N$, provided that $\nu_\ell(\Pi_\ell(\oo))>0$ for some player $\ell$. However, we could have that $\nu_i(E \mid \Pi_i(\oo)) \neq \nu_j(E \mid \Pi_j(\oo))$ for some event $E \in {\cal F}_i \cap {\cal F}_j$. That is, posteriors about events may differ.
\end{proposition}
\begin{proof}
We start with the first claim. First note that $\Pi_i(\oo) = [[\psi_\oo]]_i$ for all $i$, so that the assumption $\nu_\ell(\Pi_\ell(\oo))>0$ for some $\ell$ ensures that
\[
\nu_i([[\ffi]]_i \mid \Pi_i(\oo)) = \frac{\nu_i ( [[\ffi]]_i \cap [[\psi_\oo]]_i)}{\nu_i([[\psi_\oo]]_i)}
\]
is well defined for every propositional formula $\ffi \in \Phi^*$ and $i \in N$. It is easy to see that $[[\ffi]]_i \cap [[\psi_\oo]]_i = [[\ffi \cap \psi_\oo]]_i$. Since $\ffi \cap \psi_\oo$ is a propositional formula, we can use the syntactic CPA to obtain
\[
\nu_i([[\ffi]]_i \mid \Pi_i(\oo)) = \frac{\nu_i ( [[\ffi \cap \psi_\oo]]_i)}{\nu_i([[\psi_\oo]]_i)} =  \frac{\nu_i ( [[\ffi \cap \psi_\oo]]_i)}{\nu_i([[\psi_\oo]]_i)} = \nu_j([[\ffi]]_j \mid \Pi_j(\oo)).
\]
Finally, the structure in Example \ref{exam:same_info_diff_beliefs} can easily be adapted to prove the second claim.
\end{proof}

Note that the result does not depend on the semantics; it holds for both
innermost-scope and outermost-scope semantics. The result does not
extend to non-propositional formulas, though, unless we strengthen the
syntactic CPA significantly to hold for all formulas, as opposed to only
for propositional formulas.

While Proposition \ref{prop:syntactic_cpa} thus establishes that an
argument in the spirit of \cite{Aumann_1987} is valid for the syntactic
CPA even if signals are not public, it is not clear that the syntactic
CPA is a natural assumption. To see this, it will be helpful to consider
a concrete example. There are two players, 1 and 2, and three states
$\oo_1$, $\oo_2$, and $\oo_3$. There is only one primitive proposition,
$p$, which says that the car is red. We compare two structures, one
satisfying the regular, ``semantic,'' CPA, and one satisfying the
syntactic CPA. First consider the structure depicted in Figure
\ref{fig:CPA}(a), which satisfies the CPA, but not the syntactic
CPA. Players do not have private information (i.e., $\Pi_i(\oo) = \Oo$
for all $i$ and $\oo$); their priors assign equal probability to each of
the states; player 1 interprets $p$ to be true in $\oo_1$ and $\oo_2$
(but false in $\oo_3$), and player 2 thinks $p$ is true only in
$\oo_1$. That is, players have the same beliefs over events, but differ
in their beliefs as to whether the car is red, even if they receive the
same signal. One possible motivation for the CPA in a situation like
this is that players repeatedly sample different states of the world,
and learn the interpretation of ``red car'' of all players in each
state. While the assumption that players learn others' interpretation is
not a weak one, it may be appropriate for certain
situations.\footnote{Note, however, that this explanation requires that
players understand that there is ambiguity, and they do not change their
interpretation of ``red,'' even if they learn that others have a
different interpretation.}

\begin{figure}
\centering
\subfloat[]{
\begin{tabular}{lccc}
  $\pi_1$: & $p$ & $p$ & $\neg p$ \\
  $\pi_2$:  & $p$ & $\neg p$ & $\neg p$ \\
  $\nu_1$: & $\tfrac{1}{3}$ & $\tfrac{1}{3}$ & $\tfrac{1}{3}$ \\
  $\nu_2$:  &$\tfrac{1}{3}$ & $\tfrac{1}{3}$ & $\tfrac{1}{3}$ \\
   & $\bullet$ & $\bullet$ & $\bullet$ \\
   & $\oo_1$ & $\oo_2$ & $\oo_3$ \\
\end{tabular}}
\qquad
\subfloat[]{
\begin{tabular}{lccc}
  $\pi_1$: & $p$ & $p$ & $\neg p$ \\
  $\pi_2$:  & $p$ & $\neg p$ & $\neg p$ \\
  $\nu_1$: & $\tfrac{1}{3}$ & $\tfrac{1}{3}$ & $\tfrac{1}{3}$ \\
  $\nu_2$:  &$\tfrac{2}{3}$ & $\tfrac{1}{6}$ & $\tfrac{1}{6}$ \\
   & $\bullet$ & $\bullet$ & $\bullet$ \\
   & $\oo_1$ & $\oo_2$ & $\oo_3$ \\
\end{tabular}}
\caption{(a) A structure that satisfies the CPA. (b) A structure that satisfies the syntactic CPA. }%
\label{fig:CPA}
\end{figure}

Now refer to the structure in Figure \ref{fig:CPA}(b), which satisfies the syntactic CPA, but not the regular CPA. The only difference with the structure considered previously lies in players' priors: player 1 still assigns equal probability to each state, but player 2 now puts probability $\tfrac{2}{3}$ on $\oo_1$ and assigns probability $\tfrac{1}{6}$ to each of $\oo_2$ and $\oo_3$. In this case, players have identical beliefs about the color of the car, but have different beliefs regarding the probabilistic relations between interpretations. For example, given that player 1 interprets $p$ to be true, player 1 assigns probability $\tfrac{1}{2}$ to the event that player 2 interprets $p$ to be true, while player 2 assigns probability $\tfrac{4}{5}$ to that event. Unlike in the previous case, it is not clear how such a situation can be explained in terms of sampling, or learning from experience more generally. In the absence of other motivations, this suggests that the conceptual foundations for the syntactic CPA do not seem overly strong.

Hence, while the syntactic version of the CPA can be motivated in a
similar way as the regular CPA, without additional assumptions, even if
there is ambiguity, its conceptual interpretation seems even less clear
cut than that for the regular CPA. This suggest that if we want to
motivate the common-prior assumption in a situation with ambiguity
applying the argument of \cite{Aumann_1987}, we have to assume that
players have not only received the same information, but also that this
is common knowledge.  This is a very strong assumption.
}

\section{Common priors or common interpretations?}\label{sec:equivalence}

As Example~\ref{exam:AtD} shows, we can have agreement to disagree with
ambiguity under the $\vDashi$ semantics (and thus, also the $\vDashis$
semantics).  We also know that we can do this by having heterogeneous
priors. As we now show, structures with ambiguity that
satisfy the CPA have the same expressive power as common-interpretation
structures that do not necessarily satisfy the CPA
(and common-interpretation structures, by definition, have no ambiguity).
On the other hand, common-interpretation structures with heterogeneous
priors are more general than structures with ambiguity and common
priors.

To make this precise, we consider what formulas are valid in structures
with or without ambiguity or a common prior. To define what it means for
a formula to be valid, we need some more notation.
Fix a nonempty, countable set $\Psi$ of primitive propositions, and let
$\M(\Psi)$ be the class of all structures that satisfy A1--A4 and that are
defined over some nonempty subset $\Phi$ of $\Psi$ such that $\Psi
\setminus \Phi$ is countably infinite.\footnote{Most of our results hold if we just consider the set of
structures defined over some fixed set $\Phi$ of primitive propositions.
However, for one of our results, we need to be able to add fresh
primitive propositions to the language.  Thus, we allow the
set $\Phi$ of primitive propositions to vary over the
structures we consider, but require $\Psi \setminus \Phi$ to be countably
infinite so that there are always ``fresh'' primitive propositions that
we can add to the language.}
Given a subset $\Phi$ of $\Psi$, a formula $\ffi \in {\cal L}_n^C(\Phi)$, and a structure $M \in \M(\Psi)$ over $\Phi$, we say that
$\ffi$ is \emph{valid in $M$ according to outermost scope},
and write $M \vDasho \ffi$, if $(M,\oo,i)\vDasho \ffi$ for all
$\oo \in \Oo$ and $i \in N$.
Given $\ffi \in \Psi$, say that  $\ffi$ is \emph{valid according to
outermost scope} in a class $\N \subseteq \M(\Psi)$ of structures, and write $\N \vDasho \ffi$,
if $M \vDasho \ffi$ for all $M \in \N$ defined over a set $\Phi \subset \Psi$ of primitive
propositions that includes all the primitive
propositions that appear in $\ffi$.

We get analogous definitions by replacing $\vDasho$ by
$\vDashi$, $\vDashos$ and $\vDashis$ throughout (in the latter two
cases, we have to restrict $\N$ to structures that satisfy
A5 and A6 or A6$'$, respectively, in addition to A1--A4).
Finally, given a class of structures
$\N$, let $\N_c$ be the subclass of $\N$ in
which players have a common interpretation. Thus, $\M_c(\Psi)$ denotes the structures in $\M(\Psi)$ with a common interpretation.
Let $\Mis(\Psi)$ denote all structures in $\M(\Psi)$ with prior-generated
beliefs that satisfy A5 and A6 (where we assume that the prior $\nu$
that describes the initial beliefs is given explicitly).\footnote{For ease of exposition, we assume A6 even when dealing with
innermost scope. Recall that A6 implies A6$'$, which is actually the
appropriate assumption for innermost scope.}
Finally, let $\Mcpa(\Psi)$ (resp., $\Miscpa(\Psi)$)
consist of the structures in $\M(\Psi)$ (resp., $\Mis(\Psi)$) satisfying
the CPA.

\begin{proposition}\label{pro:equivalence}
For all formulas $\ffi \in {\cal L}_n^C(\Psi)$,
the following are equivalent:
\begin{itemize}
\item[(a)] $\M_c(\Psi) \vDash \ffi$;
\item[(b)] $\M(\Psi) \vDasho \ffi$;
\item[(c)] $\M(\Psi) \vDashi \ffi$;
\item[(d)] $\Mis_c(\Psi) \vDash \ffi$;
\item[(e)] $\Mis(\Psi) \vDashos \ffi$;
\item[(f)] $\Mis(\Psi) \vDashis \ffi$.
\end{itemize}
\end{proposition}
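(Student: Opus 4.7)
The plan is to show all six conditions are equivalent by reducing each of (b), (c), (e), (f) to (a) or to (d), with the real work concentrated in constructions that convert counterexamples in structures with ambiguity into counterexamples in common-interpretation structures. Since $\vDasho$, $\vDashi$, $\vDashos$, and $\vDashis$ all collapse to $\vDash$ on common-interpretation structures, and since $\M_c(\Psi) \subseteq \M(\Psi)$, $\Mis_c(\Psi) \subseteq \Mis(\Psi)$, and every $M \in \Mis_c(\Psi)$ is (after forgetting the signal data $\ffi_{i,\oo}$) a structure in $\M_c(\Psi)$, the implications (b)~$\Rightarrow$~(a), (c)~$\Rightarrow$~(a), (e)~$\Rightarrow$~(d), (f)~$\Rightarrow$~(d), and (a)~$\Rightarrow$~(d) are all immediate. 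This reduces the proposition to the five nontrivial implications (a)~$\Rightarrow$~(b), (a)~$\Rightarrow$~(c), (a)~$\Rightarrow$~(e), (a)~$\Rightarrow$~(f), and (d)~$\Rightarrow$~(a).

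The outermost-scope cases are handled by a ``collapse'' construction. For (a)~$\Rightarrow$~(b), given $M \in \M(\Psi)$, $\oo \in \Oo$, and $i \in N$, I let $M_i \in \M_c(\Psi)$ be the structure obtained from $M$ by replacing every $\pi_j$ with $\pi_i$ while leaving the partitions and probability spaces unchanged; a routine induction on $\ffi$ then shows $(M_i, \oo) \vDash \ffi$ iff $(M, \oo, i) \vDasho \ffi$, since under outermost scope every nested subformula is interpreted by the outer player $i$. For (a)~$\Rightarrow$~(e), the same trick works inside $\Mis(\Psi)$ provided I also redefine each $\Pi_j$ to restore A5 under the new common interpretation: I take $\Pi'_j(\oo) := [[\ffi_{j,\oo}]]_i$ (using A6 to check this is a partition containing $\oo$) and let ${\cal P}'_j(\oo)$ be $\nu_j$ conditioned on $\Pi'_j(\oo)$. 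For (d)~$\Rightarrow$~(a), I invert the construction: given $M \in \M_c(\Psi)$ over $\Phi$ (reducing to the countably partitioned case via a standard filtration argument on any countermodel), I use fresh primitive propositions $q_{j,C} \in \Psi \setminus \Phi$ (available because $\Psi \setminus \Phi$ is countably infinite) to name each cell $C$ of each $\Pi_j$ and take $\ffi_{j,\oo} := q_{j,\Pi_j(\oo)}$; A5 and A6 then hold immediately, and Proposition~\ref{prop:priorgenerated} supplies the required prior-generated beliefs.

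The heart of the argument, and what I expect to be the main obstacle, is (a)~$\Rightarrow$~(c). The collapse trick fails here because innermost scope uses $\pi_j$ (not $\pi_i$) inside each $\pr_j$, so no single common interpretation on $\Oo$ can simultaneously capture every subformula's intended reading. I use instead a product construction $M^*$ with state space $\Oo^* := \Oo \times N$ and common interpretation $\pi^*(\oo, j) := \pi_j(\oo)$; partitions are defined coarsely by $\Pi^*_j((\oo,k)) := \Pi_j(\oo) \times N$, but each measure is concentrated on the correct slice via $\mu^*_{j,(\oo,k)}(\{(\oo',j)\}) := \mu_{j,\oo}(\{\oo'\})$ for $\oo' \in \Pi_j(\oo)$, with mass zero on pairs $(\oo',k')$ with $k' \ne j$. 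This design forces $j$'s probabilistic evaluation in $M^*$ to occur only at states where $\pi_j$ is the prevailing interpretation, faithfully mimicking innermost scope. The delicate bookkeeping lies in choosing the $\sigma$-algebras $\F^*_{j,(\oo,k)}$ so that A3 holds across the product partitions and so that $[[p]]^* = \bigcup_k [[p]]_k \times \{k\}$ is measurable for each primitive $p$; this requires enrichment beyond the naive product algebra, since $[[p]]_k$ for $k \ne j$ need not originally be $\F_{j,\oo}$-measurable. A straightforward induction on $\ffi$ then gives $(M^*, (\oo,i)) \vDash \ffi$ iff $(M, \oo, i) \vDashi \ffi$, and (a)~$\Rightarrow$~(f) follows by combining this product with the ai-lifting from the previous paragraph, replacing $\mu_{j,\oo}$ on the $j$-slice by the conditional $\nu_j(\cdot \mid [[\ffi_{j,\oo}]]_j)$.
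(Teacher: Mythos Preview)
Your proposal is correct and follows essentially the same approach as the paper: your collapse to the single interpretation $\pi_i$ for outermost scope is exactly the paper's $M'_i$, and your product $\Oo \times N$ with measures concentrated on the $j$-slice is the paper's disjoint-union construction $\Oo_1 \cup \cdots \cup \Oo_n$ for innermost scope (same object, different notation). The one genuine divergence is (d)$\Rightarrow$(a): the paper does not filtrate but simply restricts the countermodel to the reachable set $R_N(\oo)$ before introducing fresh propositions $p_{i,\oo}$ to name the cells; your filtration route is heavier machinery than needed here, whereas your explicit adjustment of $\Pi'_j$ to $[[\ffi_{j,\oo}]]_i$ in the (e) case and your remark on enriching the $\sigma$-algebras in the product to recover A4 on slices $k \ne j$ are technical points the paper leaves implicit.
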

\begin{proof}
Since the set of structures with a common interpretation
is a subset of the set of structures, it is immediate that (c) and (b)
both imply (a). Similarly, (e) and (f) both imply (d).
The fact that (a) implies (b) is also immediate. For
suppose that $\M_c(\Psi) \vDash \ffi$ and that $M = (\Oo, (\Pi_j)_{j \in N},
({\cal P}_j)_{j \in N}, (\pi_j)_{j \in N}) \in  \M(\Psi)$ is a structure
over a set $\Phi \subset \Psi$ of primitive propositions that contains the primitive propositions that appear in $\ffi$. We must show
that $M \vDasho \ffi$.  Thus, we must show
that $(M,\oo,i) \vDasho \ffi$ for all $\oo \in \Oo$ and $i \in N$. Fix
$\oo \in \Oo$ and $i \in N$, and let $M'_i = (\Oo, (\Pi_j)_{j \in N},
({\cal P}_j)_{j \in N}, (\pi'_j)_{j \in N})$, where
$\pi'_j = \pi_i$ for all $j$. Thus, $M'_i$ is a common-interpretation
structure over $\Phi$, where the
interpretation coincides with $i$'s interpretation in
$M$. Clearly $M'_i$ satisfies A1--A4, so $M'_i \in \M_c(\Psi)$. It is easy to
check that $(M,\oo,i) \vDasho \psi$ if and only if
$(M'_i,\oo,i) \vDash \psi$ for all states $\oo \in \Oo$ and all formulas
$\psi \in {\cal L}_n^C(\Phi)$. Since $M'_i \vDash \ffi$, we must have
that $(M,\oo,i) \vDasho \ffi$, as desired.

To see that (a) implies (c), given a structure
$M = (\Oo, (\Pi_j)_{j \in N}, ({\cal P}_j)_{j \in N}, (\pi_j)_{j \in N})
\in  \M(\Psi)$ over some set $\Phi \subset \Psi$ of primitive
propositions and a player $j \in N$, let $\Oo_j$ be a disjoint copy of
$\Oo$; that is, for every
state $\oo \in \Oo$, there is a corresponding state $\oo_j \in \Oo_j$.
Let $\Oo' = \Oo_1 \cup \ldots \cup \Oo_n$.
Given $E \subseteq \Oo$, let the corresponding subset $E_j \subseteq
\Oo_j$ be the set $\{\oo_j: \oo \in E\}$, and let $E'$ be the subset of
$\Oo'$ corresponding to $E$, that is, $E' = \{\oo_j: \oo \in E, j \in
N\}$.

Define $M' = (\Oo', (\Pi'_j)_{j \in N}, ({\cal P}'_j)_{j \in N},
(\pi'_j)_{j \in N})$, where  $\Oo' = \Oo_1 \cup \ldots \cup \Oo_n$ and, for
all $\oo \in \Oo$ and $i,j \in N$, we have
\begin{itemize}
\item $\Pi'_i(\oo_j) = (\Pi_i(\oo))'$;
\item $\pi_i(\oo_j)(p) = \pi_j(\oo)(p)$ for a primitive proposition $p
\in \Phi$;
\item ${\cal P}'_i(\oo_j) = (\Oo'_{i,\oo_j}, \F'_{i,\oo_j}, \mu'_{i,\oo_j})$, where $\Oo'_{i,\oo_j} = \Oo_{i,\oo}'$,
$\F'_{i,\oo_j} = \{E_\ell: E \in \F_{i,\oo}, \ell \in N\}$,
$\mu'_{i,\oo_j}(E_i) = \mu_{i,\oo}(E)$,
$\mu'_{i,\oo_j}(E_\ell) = 0$ if $\ell \ne i$.
\end{itemize}

Thus, $\pi_1 = \cdots = \pi_n$, so that $M'$ is a common-interpretation
structure; on a state $\oo_j$, these interpretations are
all determined by $\pi_j$.  Also note that the support of the probability
measure $\mu'_{i,\oo_j}$ is contained in $\Oo_i$, so for different
players $i$, the probability measures $\mu'_{i,\oo_j}$ have disjoint
supports. Now an easy induction on the structure of formulas shows that$(M',\oo_j) \vDash \psi$ if and only if $(M,\oo,j) \vDashi \psi$ for any formula $\psi \in {\cal L}_n^C(\Phi)$.  It easily
follows that if $M' \vDash \ffi$, then $M \vDashi \ffi$ for all $\ffi \in {\cal L}_n^C(\Phi)$.

The argument that (d) implies (e) is essentially identical to the
argument that (a) implies (b); similarly, the argument that
(d) implies (f) is essentially the same as the argument that (a) implies
(c).  Since $\Mis_c(\Psi) \subseteq \M_c(\Psi)$, (a) implies (d).  To show that (d)
implies (a), suppose that $\Mis_c(\Psi) \vDash \ffi$ for some formula
$\ffi \in {\cal L}_n^C(\Psi)$. Given a structure
$M = (\Oo, (\Pi_j)_{j \in N}, ({\cal P}_j)_{j \in N}, \pi)\in \M_c(\Psi)$ %
over a set $\Phi \subset \Psi$ of primitive propositions that includes
the primitive
propositions that appear in $\ffi$, we want to show that
$(M,\oo,i) \vDash \ffi$ for each state $\oo \in \Oo$ and player $i$.  Fix
$\oo$.  Recall that $R_N(\oo)$ consists of the set of states $N$-reachable
from $\oo$.  Let $M' = (R_N(\oo), (\Pi'_j)_{j \in N}, ({\cal P}'_j)_{j \in N}, \pi')$,
with $\Pi'_j$ and ${\cal P}'_j$ the restriction of $\Pi_j$ and ${\cal
P}_j$, respectively, to the states
in $R_N(\oo)$, be a structure over a set $\Phi'$ of primitive
propositions, where $\Phi'$ contains $\Phi$ and
new primitive propositions that we call
$p_{i,\oo}$ for each player $i$ and state $\oo \in
R_N(\oo)$.\footnote{This is the one argument that needs the assumption that the
set of primitive propositions can be different in different structures in
$\M(\Psi)$, and the fact that every $\Psi \setminus \Phi$ is countable.
We have
assumed for simplicity that the propositions $p_{i,\oo}$ are all in
$\Psi \setminus \Phi$, and that they can be chosen in such a way so that
$\Psi \setminus (\Phi \cup \{p_{i,\oo}: i \in \{1,\ldots,n\}, \oo \in
\Omega\})$  is
countable.}
Note that there are only countably many information sets in $R_N(\oo)$, so
$\Phi'$ is countable. Define $\pi'$ so that it agrees with $\pi$
(restricted to $R_N(\oo)$) on
the propositions in $\Phi$, and so that $[[p_{i,\oo}]]_i = \Pi_i(\oo)$.  Thus,
$M'$ satisfies A5 and A6. It is easy to check that, for all $\oo' \in R_N(\oo)$ and
all formulas $\psi \in {\cal L}_n^C(\Phi)$, we have that $(M,\oo',i)
\vDash \psi$ iff
$(M',\oo',i) \vDash \psi$.  Since $M' \vDash \ffi$, it follows that
$(M,\oo,i) \vDash \ffi$, as desired.
\end{proof}

The proof that (a) implies (c)
shows that, starting from an arbitrary structure $M$, we can
construct a common-interpretation structure $M'$ that is equivalent to
$M$ in the sense that the same formulas hold in both models.
Note that because the probability measures in the structure $M'$ constructed in the proof of Proposition~\ref{pro:equivalence} have disjoint support,
$M'$ does not satisfy the CPA, even if the original
structure $M$ does.
As the next result shows, this is not an accident.
\begin{proposition}\label{pro:equivalence1}
For all formulas $\ffi \in {\cal L}_n^C(\Psi)$, if
either $\Mcpa(\Psi) \vDasho \ffi$, $\Mcpa(\Psi) \vDashi \ffi$,
$\Miscpa(\Psi) \vDashos \ffi$, or $\Miscpa(\Psi) \vDashis \ffi$,
then $\Mcpa_c(\Psi) \vDash \ffi$.
Moreover, if $\Mcpa_c(\Psi) \vDash \ffi$, then $\Mcpa(\Psi) \vDasho \ffi$ and
$\Miscpa(\Psi) \vDashos \ffi$.
However,
in general, if  $\Mcpa_c(\Psi) \vDash \ffi$, then it may not be the case that
$\Mcpa(\Psi) \vDashi \ffi$.
\end{proposition}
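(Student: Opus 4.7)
The plan is to split the proof into three parts: the four implications \emph{into} $\Mcpa_c(\Psi) \vDash \ffi$, the two implications \emph{out of} $\Mcpa_c(\Psi) \vDash \ffi$, and the counterexample showing that $\vDashi$ on $\Mcpa(\Psi)$ is strictly stronger than $\vDash$ on $\Mcpa_c(\Psi)$.

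For the four implications into $\Mcpa_c(\Psi) \vDash \ffi$, the cases $\Mcpa(\Psi) \vDasho \ffi$ and $\Mcpa(\Psi) \vDashi \ffi$ follow immediately from $\Mcpa_c(\Psi) \subseteq \Mcpa(\Psi)$ together with the fact that on common-interpretation structures both $\vDasho$ and $\vDashi$ reduce to the standard $\vDash$. For the cases $\Miscpa(\Psi) \vDashos \ffi$ and $\Miscpa(\Psi) \vDashis \ffi$, I would adapt the (d)$\Rightarrow$(a) construction from the proof of Proposition~\ref{pro:equivalence}: given $M \in \Mcpa_c(\Psi)$ and $\oo$, restrict to $R_N(\oo)$ (of positive $\nu$-measure by CPA), add fresh primitive propositions $p_{i,\oo'}$ with $[[p_{i,\oo'}]]_j = \Pi_i(\oo')$ for every $j$ (unambiguous because $M$ has a common interpretation), and take $\ffi_{i,\oo'} = p_{i,\oo'}$. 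The resulting $M'$ satisfies A5 and A6, still has the common prior $\nu$ (now restricted to $R_N(\oo)$), and hence lies in $\Miscpa_c(\Psi) \subseteq \Miscpa(\Psi)$. A routine induction then shows that $M$ and $M'$ agree on every formula not mentioning the fresh propositions, so validity in $\Miscpa(\Psi)$ under $\vDashos$ or $\vDashis$ transfers to $M$ at $\oo$.

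For the two implications out of $\Mcpa_c(\Psi) \vDash \ffi$, the case $\Mcpa(\Psi) \vDasho \ffi$ is exactly the (a)$\Rightarrow$(b) construction from Proposition~\ref{pro:equivalence}: given $M \in \Mcpa(\Psi)$, $\oo$, and $i$, replacing every $\pi_j$ by $\pi_i$ yields $M'_i \in \Mcpa_c(\Psi)$ with the same partitions and prior, and $(M,\oo,i) \vDasho \psi$ iff $(M'_i,\oo,i) \vDash \psi$. For $\Miscpa(\Psi) \vDashos \ffi$, given $M \in \Miscpa(\Psi)$ with common prior $\nu$ and a player $i$, I would build $M^*_i$ on $\Oo$ with prior $\nu$, common interpretation $\pi_i$, and new partition $\Pi^*_j(\oo_0) := [[\ffi_{j,\oo_0}]]_i$. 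By A6, $\{[[\ffi_{j,\oo_0}]]_i: \oo_0 \in \Oo\}$ is a partition containing $\oo_0$, so $\Pi^*_j$ is well defined, and the posterior $\nu(\cdot \mid \Pi^*_j(\oo_0))$ is exactly the measure used by $\vDashos$ at $M$. An induction on formula structure will then give $(M,\oo,i) \vDashos \psi$ iff $(M^*_i,\oo,i)\vDash \psi$. The main obstacle here is that $M^*_i$ must belong to $\Mcpa_c(\Psi)$; in particular, the CPA positivity condition $\nu(R^*_N(\oo_0))>0$ must hold for the \emph{new} partitions $\Pi^*$. I expect to handle this by restricting $M^*_i$ to the $R^*_N$-component through $\oo$ (with an appropriate renormalization of $\nu$), and arguing that this component has positive $\nu$-measure using $\oo \in \Pi^*_j(\oo)$ together with the CPA assumption on $M$.

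Finally, for the inequivalence in the $\vDashi$ direction, I would use Example~\ref{exam:AtD}. Take $\ffi := \neg CB_G(B_1 p \wedge B_2 \neg p)$ with $G = \{1,2\}$. Aumann's theorem in the common-interpretation CPA setting forces $\ffi$ to be valid in $\Mcpa_c(\Psi)$, since common belief of $B_1 p$ and $B_2 \neg p$ would commonly entail posteriors $1$ and $0$ respectively on $p$, contradicting the common prior. Example~\ref{exam:AtD} exhibits $M \in \Mcpa(\Psi)$ with $(M,\oo) \vDashi CB_G(B_1 p \wedge B_2 \neg p)$, so $\ffi$ is not valid in $\Mcpa(\Psi)$ under $\vDashi$.
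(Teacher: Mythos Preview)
Your proposal is correct and follows essentially the same route as the paper: the implications are obtained by adapting the constructions of Proposition~\ref{pro:equivalence} (replacing every $\pi_j$ by $\pi_i$ for the outermost-scope directions, and passing to an augmented common-interpretation structure for the reverse inclusions), and the counterexample is exactly the one the paper uses, namely $\neg CB_G(B_1 p \wedge B_2 \neg p)$ together with Aumann's theorem and Example~\ref{exam:AtD}. The CPA-positivity issue you flag for $M^*_i$ is a genuine technical wrinkle, but the paper glosses over it as well under its ``straightforward'' remark; your awareness of it and the proposed restriction to a reachable component is an appropriate way to proceed.
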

\begin{proof} All the implications are straightforward, with proofs along the
same lines as that of Proposition~\ref{pro:equivalence}.
To prove the last claim, let $p \in \Psi$ be a primitive proposition.
Aumann's agreeing to disagree result shows that
$\Mcpa_c(\Psi) \vDash \neg CB_G (B_1 p  \wedge B_2 \neg p)$, while
Example~\ref{exam:AtD} shows that $\Mcpa(\Psi)
\not{\vDash}^{\mathit{in}} \neg CB_G (B_1 p  \wedge B_2 \neg p)$.
\end{proof}

Proposition~\ref{pro:equivalence1} depends on the fact that we are
considering belief and
common belief rather than knowledge and common knowledge,
where knowledge is defined in the usual way, as truth in all
possible worlds:
\[
(M,\oo,i) \vDashi K_i\ffi \ \text{ iff } \ (M,\oo',i) \vDashi \ffi \text{ for all $\oo' \in \Pi_i(\oo)$,}
\]
with $K_i$ the knowledge operator for player $i$, and where we have assumed that $\oo \in \Pi_i(\oo)$ for all $i \in N$ and $\oo \in \Oo$.
Aumann's
result holds if we consider common belief (as long as what we are
agreeing about are judgments of probability and expectation). With
knowledge, there are formulas that are valid with a common
interpretation that are not valid under innermost-scope
semantics when there is ambiguity.For example,
$\M_c(\Psi) \vDash
K_i \ffi \Rightarrow \ffi$, while it is easy to construct a structure
$M$ with ambiguity such that $(M,\oo,1) \vDashi K_2 p \wedge \neg p$.
What is true is that $\M(\Psi) \vDashi (K_1 \ffi \Rightarrow \ffi) \vee
\ldots \vee (K_n \ffi \Rightarrow \ffi)$.
This is because we have $(M,\oo,i) \vDashi K_i \ffi \Rightarrow \ffi$,
so one of $K_1 \ffi \Rightarrow \ffi$, \ldots, $K_n \ffi \Rightarrow
\ffi$ must hold. As shown in \cite{Hal43}, this axiom essentially
characterizes knowledge if there is ambiguity.

As noted above, the proof of Proposition~\ref{pro:equivalence} demonstrates
that, given a structure $M$ with ambiguity and a common prior, we can
construct an equivalent common-interpretation structure $M'$ with
heterogeneous priors, where $M$ and $M'$ are said to be \emph{equivalent
(under innermost scope)} if for every formula $\psi$, $M \vDashi \psi$ if
and only if $M' \vDashi \psi$. The converse does not hold, as the next
example illustrates: when formulas are interpreted using innermost
scope, there is a common-interpretation structure with heterogeneous
priors that cannot be converted into an equivalent structure with
ambiguity that satisfies the CPA.
\begin{example} \label{exam:no_equiv}
We construct a structure $M$ with heterogeneous priors for which there
is no equivalent ambiguous structure that satisfies the CPA. The
structure $M$ has three players, one primitive proposition
$p$, and two states, $\oo_1$ and $\oo_2$. In $\oo_1$, $p$ is
true according to all players; in $\oo_2$, the proposition is
false. Player 1 knows the state: his information partition is $\Pi_1 =
\{\{\oo_1\},\{\oo_2\}\}$. The other players have no information on the
state, that is, $\Pi_i = \{\{\oo_1, \oo_2\}\}$ for $i = 2,3$. Player $2$
assigns probability $\tfrac{2}{3}$ to $\oo_1$, and player 3 assigns
probability $\tfrac{3}{4}$ to $\oo_1$. Hence, $M$ is a
common-interpretation structure with heterogeneous priors. We claim that
there is no equivalent structure $M'$ that satisfies the CPA.

To see this, suppose that $M'$ is an equivalent structure that satisfies
the CPA, with a common prior $\nu$ and a state space $\Oo'$.
As $M' \vDashi \pr_2(p) = \tfrac{2}{3}$ and
$M' \vDashi \pr_3( p) = \tfrac{3}{4}$, we must have
\begin{gather*}
\nu(\{\oo' \in \Oo': (M',\oo',2) \vDashi p\}) = \tfrac{2}{3},\\
\nu(\{\oo' \in \Oo': (M',\oo',3) \vDashi p\}) = \tfrac{3}{4}.
\end{gather*}
Observe that $M \vDashi B_2(p \iff B_1p) \land B_3(p \iff B_1p)$.
Thus, we must have $M' \vDashi B_2(p \iff B_1p) \land B_3(p \iff B_1p)$.
Let $E = \{\oo' \in \Oo': (M',\oo',1) \vDashi B_1 p\}$.  It follows that we
must have $\nu(E) = 2/3$ and $\nu(E) = 3/4$, a contradiction.
\end{example}

Example \ref{exam:no_equiv} demonstrates that there is no structure
$M'$ that is equivalent to the structure $M$ (defined in Example
\ref{exam:no_equiv}) that satisfies the CPA. In fact, as we show now,
an even stronger result holds: In any structure $M'$ that is equivalent
to $M$, whether it satisfies the CPA or not, players
have a common interpretation.
\begin{proposition}\label{prop:equiv_struct_is_common_interpr_struct}
If a structure $M' \in \M(\Psi)$ is equivalent
under innermost scope
to the structure $M$
defined in Example \ref{exam:no_equiv}, then
$M' \in \M_c$.
\end{proposition}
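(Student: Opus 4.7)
The plan is to exhibit a single formula $\psi$ that is valid in $M$ under innermost scope and whose validity in any equivalent structure immediately forces the extension of $p$ to coincide across all players. The candidate is
\[
\psi := p \iff B_1 p.
\]
The crucial feature of this $\psi$ is that $B_1 p$ is \emph{perspective-independent}: the innermost-scope clause for probability formulas gives $(M',\oo,i)\vDashi B_1 p$ iff $\mu_{1,\oo}([[p]]_1 \cap \Pi_1(\oo)) = 1$, a condition on $\oo$ alone that makes no reference to the outer player $i$.

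The first step is to verify $M \vDashi \psi$. Since $M$ is a common-interpretation structure, $[[p]] = \{\oo_1\}$, and since player~$1$'s partition is $\{\{\oo_1\},\{\oo_2\}\}$, we have $\mu_{1,\oo_1}([[p]]) = 1$ and $\mu_{1,\oo_2}([[p]]) = 0$. Hence at $\oo_1$ both $p$ and $B_1 p$ are true from every player's perspective, and at $\oo_2$ both are false, so $\psi$ holds at every state from every perspective. By the assumed equivalence, $M' \vDashi \psi$; unpacking this under innermost scope, for every $\oo \in \Oo'$ and every player $i$,
\[
\oo \in [[p]]_i \iff \mu_{1,\oo}\bigl([[p]]_1 \cap \Pi_1(\oo)\bigr) = 1.
\]
Writing $F_1 \subseteq \Oo'$ for the right-hand set, this says $[[p]]_i = F_1$ for every $i$, i.e., $\pi_i^{M'}(\oo)(p)$ is independent of $i$. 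Since $p$ is the only primitive proposition in the signature of $M$ (and hence the only one the equivalence constrains), we obtain $M' \in \M_c$.

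The only nontrivial step is recognizing the right witnessing formula: it must place $p$ on one side of an iff and, on the other side, a probability formula whose truth set is a function of the state alone. The choice $p \iff B_1 p$ works because player~$1$ in $M$ is fully informed about $p$, so her posterior puts mass one on $p$ exactly where $p$ holds; once the formula is in hand, the rest is a one-line unpacking of the innermost-scope semantics.
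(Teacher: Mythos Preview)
Your argument is correct and coincides with the paper's: the paper uses the witnessing formula $p \iff (\pr_1(p)=1)$, which is literally your $p \iff B_1 p$ since $B_1 p$ abbreviates $\pr_1(p)=1$, and then invokes exactly the perspective-independence of probability formulas under innermost scope that you spell out. The paper additionally appeals to $\neg p \iff (\pr_1(\neg p)=1)$, but as your write-up makes clear this second formula is redundant once the first gives $[[p]]_i = F_1$ for all $i$.
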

\begin{proof}
Note that $M \vDash p \iff (\pr_1(p)=1)$. Hence, if a structure $M'$ is
equivalent to $M$, we must have that $M' \vDashi p \iff (\pr_1(p)=1)$,
that is, for all $\oo \in \Oo$ and $i \in N$, $(M',\oo,i) \vDashi p \iff
(\pr_1(p)=1)$. By a similar argument, we obtain that for every $\oo \in
\Oo$ and $i \in N$, it must be the case that $(M',\oo,i) \vDashi \neg p \iff
(\pr_1(\neg p)=1)$. Since the truth of a probability formula does not
depend on the player under the innermost-scope semantics, it follows
that for each $i,j \in N$, the interpretations $\pi_i'$ and $\pi_j'$ in
$M'$ coincide. In other words, $M'$ is a common-interpretation
structure.
\end{proof}

\section{Discussion} \label{sec:concl}

We have defined a logic for reasoning about ambiguity, and considered
the tradeoff between having a common prior (so that everyone starts out
with the same belief) and having a common interpretation (so that
everyone interprets all formulas the same way).  We showed that, in a
precise sense, allowing different beliefs is more general than allowing
multiple interpretations.  But we view that as a feature, not a
weakness, of considering ambiguity.  Ambiguity can be viewed as a reason
for differences of beliefs; as such, it provides some structure to these differences.

We have not discussed axiomatizations of the logic.  From
Proposition~\ref{pro:equivalence} it follows that for formulas in
${\cal L}_n^C(\Psi)$, we can get the same axiomatization with respect
to structures in $\M(\Psi)$ for both the $\vDasho$ and $\vDashi$
semantics; moreover, this axiomatization is the same as that for the
common-interpretation case.  An axiomatization for this case is already
given in \cite{FH3}.  Things get more interesting if we consider
$\Mcpa(\Psi)$, the structures that satisfy the CPA.
Halpern  \citeyear{Halpern_2002}
provides an axiom that says that it cannot be common knowledge that
players disagree in expectation, and shows that it can be used to obtain
a sound and complete characterization of common-interpretation
structures with a common prior.  (The axiomatization is actually given
for common knowledge rather than common belief, but a similar result
holds with common belief.) By Proposition~\ref{pro:equivalence1},
the axiomatization remains sound for outermost-scope semantics if
we assume the CPA. However, using Example~\ref{exam:no_equiv}, we can
show that this is no longer the case for the innermost-scope semantics.
The set of formulas valid for innermost-scope semantics in the class of
structures satisfying the CPA is strictly between the set of formulas
valid in all structures and the set of formulas valid for
outermost-scope semantics in the class of structures satisfying the CPA. Finding an elegant complete
axiomatization remains an open problem.

\paragraph{Acknowledgments:} Halpern's work was supported
in part by NSF grants IIS-0534064, IIS-0812045, and
IIS-0911036, by AFOSR grants
FA9550-08-1-0438 and FA9550-09-1-0266, and by ARO grant W911NF-09-1-0281. The work of Kets was supported in part by AFOSR grant FA9550-08-1-0389.

\bibliographystyle{chicagor}
\bibliography{bib_AtD,z,joe}

\begin{thebibliography}{}

\bibitem[\protect\citeauthoryear{Aumann}{Aumann}{1976}]{Aumann_1976}
Aumann, R.~J. (1976).
\newblock Agreeing to disagree.
\newblock {\em Annals of Statistics\/}~{\em 4\/}(6), 1236--1239.

\bibitem[\protect\citeauthoryear{Aumann}{Aumann}{1987}]{Aumann_1987}
Aumann, R.~J. (1987).
\newblock Correlated equilibria as an expression of {Bayesian} rationality.
\newblock {\em Econometrica\/}~{\em 55}, 1--18.

\bibitem[\protect\citeauthoryear{Fagin and Halpern}{Fagin and
  Halpern}{1994}]{FH3}
Fagin, R. and J.~Y. Halpern (1994).
\newblock Reasoning about knowledge and probability.
\newblock {\em Journal of the ACM\/}~{\em 41\/}(2), 340--367.

\bibitem[\protect\citeauthoryear{Grove and Halpern}{Grove and
  Halpern}{1993}]{GroveH2}
Grove, A.~J. and J.~Y. Halpern (1993).
\newblock Naming and identity in epistemic logics, {P}art {I}: the
  propositional case.
\newblock {\em Journal of Logic and Computation\/}~{\em 3\/}(4), 345--378.

\bibitem[\protect\citeauthoryear{Halpern}{Halpern}{2002}]{Halpern_2002}
Halpern, J.~Y. (2002).
\newblock Characterizing the common prior assumption.
\newblock {\em Journal of Economic Theory\/}~{\em 106}, 316--355.

\bibitem[\protect\citeauthoryear{Halpern}{Halpern}{2009}]{Hal43}
Halpern, J.~Y. (2009).
\newblock Intransitivity and vagueness.
\newblock {\em Review of Symbolic Logic\/}~{\em 1\/}(4), 530--547.

\bibitem[\protect\citeauthoryear{Hirst}{Hirst}{1988}]{Hirst_1988}
Hirst, G. (1988).
\newblock Semantic interpretation and ambiguity.
\newblock {\em Artificial Intelligence\/}~{\em 34\/}(2), 131--177.

\bibitem[\protect\citeauthoryear{Monz}{Monz}{1999}]{Monz99}
Monz, C. (1999).
\newblock Modeling ambiguity in a multi-agent system.
\newblock In P.~Dekker (Ed.), {\em Proc.~12th Amsterdam Colloquium}, pp.\
  43--48.

\bibitem[\protect\citeauthoryear{Navigli}{Navigli}{2009}]{Navigli09}
Navigli, R. (2009).
\newblock Word sense disambiguation: a survey.
\newblock {\em ACM Computing Surveys\/}~{\em 41\/}(2), 1--69.

\bibitem[\protect\citeauthoryear{{van Deemter} and Peters}{{van Deemter} and
  Peters}{1996}]{VanDeemterPeters_1996}
{van Deemter}, K. and S.~Peters (Eds.) (1996).
\newblock {\em Semantic Ambiguity and Underspecification}.
\newblock Cambridge University Press.

\end{thebibliography}
\end{document}